\pgfplotsset{compat=1.18}  % Use one compatibility setting for PGFPlots
\theoremstyle{plain}
\newtheorem{theorem}{Theorem}[section]
\newtheorem{proposition}[theorem]{Proposition}
\theoremstyle{definition}
\newtheorem{definition}[theorem]{Definition}
\theoremstyle{remark}
\definecolor{group1}{RGB}{245,250,255}  % Lighter Alice Blue
\definecolor{group2}{RGB}{250,255,250}  % Lighter Mint Cream
\definecolor{group3}{RGB}{255,255,245}  % Lighter Papaya Whip
\definecolor{group4}{RGB}{255,245,255}  % Lighter Lavender Blush
\definecolor{group5}{RGB}{255,240,235}  % Lighter Light Pink
\icmltitlerunning{Submission and Formatting Instructions for ICML 2026}
\begin{document}

\twocolumn[
  \icmltitle{Revisiting Reweighted Risk for Calibration: AURC, Focal and Inverse Focal Loss}

  % It is OKAY to include author information, even for blind submissions: the
  % style file will automatically remove it for you unless you've provided
  % the [accepted] option to the icml2026 package.

  % List of affiliations: The first argument should be a (short) identifier you
  % will use later to specify author affiliations Academic affiliations
  % should list Department, University, City, Region, Country Industry
  % affiliations should list Company, City, Region, Country

  % You can specify symbols, otherwise they are numbered in order. Ideally, you
  % should not use this facility. Affiliations will be numbered in order of
  % appearance and this is the preferred way.
  \icmlsetsymbol{equal}{*}
% \aistatsauthor{ Han Zhou \And
% Sebastian G.~Gruber\And
% Teodora Popordanoska\And
% Matthew B. Blaschko }
% \aistatsaddress{ ESAT-PSI, KU Leuven, Belgium }

  \begin{icmlauthorlist}
    \icmlauthor{Han Zhou}{yyy}
     \icmlauthor{Sebastian G.~Gruber}{yyy}
    \icmlauthor{Teodora Popordanoska}{yyy}
     \icmlauthor{Matthew B. Blaschko}{yyy}
  \end{icmlauthorlist}

  \icmlaffiliation{yyy}{ESAT-PSI, KU Leuven, Belgium }
  % \icmlaffiliation{comp}{Company Name, Location, Country}
  % \icmlaffiliation{sch}{School of ZZZ, Institute of WWW, Location, Country}

  \icmlcorrespondingauthor{Han Zhou}{han.zhou@esat.kuleuven.be}
  % \icmlcorrespondingauthor{Firstname2 Lastname2}{first2.last2@www.uk}

  % You may provide any keywords that you find helpful for describing your
  % paper; these are used to populate the "keywords" metadata in the PDF but
  % will not be shown in the document
  \icmlkeywords{Machine Learning, ICML}

  \vskip 0.3in
]
\printAffiliationsAndNotice{}
\begin{abstract}
Several variants of reweighted risk functionals, such as focal loss, inverse focal loss, and the Area Under the Risk–Coverage Curve (AURC), have been proposed for improving model calibration; yet their theoretical connections to calibration errors remain under-explored. In this paper, we revisit a broad class of weighted risk functions and find a principled connection between calibration error and selective classification. We show that minimizing calibration error is closely linked to the selective classification paradigm and demonstrate that optimizing selective risk in low-confidence regions naturally improves calibration. Our proposed loss shares a similar reweighting strategy with dual focal loss but offers greater flexibility through the choice of confidence score functions (CSFs). Furthermore, our approach utilizes a bin-based cumulative distribution function (CDF) approximation, enabling efficient gradient-based optimization with $\mathcal{O}(nM)$ complexity for $n$ samples and $M$ bins. Empirical evaluations demonstrate that our method achieves competitive calibration performance across a range of datasets and model architectures.
\end{abstract}

\section{Introduction}
Over the past decade, advances in deep learning have enabled neural networks to achieve, and in some cases surpass, human-level performance on tasks ranging from image recognition~\citep{haggenmuller2021skin} to natural language understanding~\citep{achiam2023gpt, liu2024deepseek}. However, achieving high accuracy does not necessarily ensure that the predicted confidence scores align with the true probabilities of outcomes, a phenomenon known as miscalibration~\citep{guo2017calibration}. Miscalibration can lead to critical errors in high‐stakes applications where trustworthy uncertainty estimates are essential. To address and quantify this gap, the research community has introduced a range of calibration error metrics~\citep{brier1950verification, guo2017calibration, kumar2019verified, popordanoska2022consistent} to evaluate the discrepancy between predicted confidences and observed outcome frequencies. Recent work has explored both trainable~\citep{karandikar2021soft, kumar2018trainable, krishnan2020improving, bohdal2023meta, popordanoska2022consistent} and post-hoc~\citep{guo2017calibration, ding2021local, kull2019beyond, wenger2020non, Ma2021a} approaches for neural network calibration, achieving impressive reductions in terms of those empirical calibration errors. Among trainable methods, focal loss~\citep{lin2017focal}—which emphasizes hard examples during training—has gained popularity for its calibration benefits. Conversely, inverse focal loss~\citep{wang2021rethinking}, which focuses on the easy examples more, has also been shown to improve calibration in certain settings. This raises a key question: \emph{Which weighting scheme is most effective for improving model calibration?} 

We address this question by examining the relationship between selective classification \citep{geifman2017selective} and calibration error. We revisit the exisiting theoretical results that links calibration error minimization to the principles of selective classification, showing that optimal calibration strategies naturally emerge from selective weighting based on confidence rankings. Building on this connection, we propose a loss function similar to the AURC \citep{geifman2018bias}, but focused specifically on selective risk in low-confidence regions that require careful calibration. Unlike AURC, our method employs a bin-based CDF approximation that enables efficient gradient-based optimization with $\mathcal{O}(nM)$ complexity. This adaptation ensures computational scalability while maintaining differentiability with respect to any choice of CSFs.

Our main \textbf{contributions} are summarized as follows:
% \vspace{-3mm}
\begin{itemize}
    \setlength\parskip{0pt}   % ensures no extra paragraph spacing
    \item We revisit the calibration error minimization and its relation to the “abstain” strategy at the core of selective classification.
    \item Motivated by this connection, we propose a computationally efficient, differentiable loss function based on selective classification principles, using a bin-based CDF approximation that avoids expensive sorting operations while maintaining gradient flow for any CSF choice.
    \item Through extensive experiments, we show that our proposed method delivers competitive calibration performance across various datasets and network architectures, surpassing the majority of existing calibration-aware loss functions.
\end{itemize}

\section{Related work}
\textbf{Post-hoc calibration} methods address miscalibration by learning a mapping function that transforms model outputs into calibrated posterior probabilities. Approaches like Platt scaling~\citep{platt1999probabilistic}, temperature scaling (TS)~\citep{guo2017calibration}, and local temperature scaling~\citep{ding2021local} typically preserve classification accuracy by introducing a scalar temperature parameter. In contrast, other probabilistic methods—including Beta calibration~\citep{kull2017beta}, Dirichlet calibration~\citep{kull2019beyond}, Gaussian processes~\citep{wenger2020non}, and the Concrete distribution~\citep{esaki2024accuracy}—may achieve better calibration but often at the cost of reduced classification accuracy. Notably, Dirichlet calibration generalizes Beta calibration to multi-class settings. \cite{Ma2021a} and \cite{liu2024optimizing} proposed post-hoc calibration methods that adjust predicted probabilities based on correctness, using binary classifiers to either distinguish between correct and incorrect predictions or to directly modulate confidence accordingly. However, these approaches often involve a trade-off between calibration and predictive accuracy. We aim to develop a trainable calibration method that improves calibration performance without significantly compromising predictive accuracy.

\noindent \textbf{Trainable calibration methods} seek to enhance the calibration of deep neural networks by explicitly modifying the training objective. One class of approaches introduces differentiable surrogate losses that directly approximate the expected calibration error, as demonstrated in~\citep{karandikar2021soft, kumar2018trainable, krishnan2020improving, bohdal2023meta, popordanoska2022consistent}. \citet{popordanoska2022consistent} proposed $\mathrm{ECE}^{\mathrm{KDE}}$, a differentiable estimator of calibration error, and introduced the KDE-XE objective, which augments cross-entropy (XE) with an ECE-KDE regularizer to balance accuracy and calibration. Alternatively, several studies replace the standard cross-entropy loss with calibration-aware objectives, including mean squared error loss~\citep{hui2020evaluation}, inverse focal loss~\citep{wang2021rethinking}, and focal loss~\citep{lin2017focal}. Focal loss, in particular, has been empirically shown to improve calibration by mitigating overconfident predictions~\citep{mukhoti2020calibrating, charoenphakdee2021focal, komisarenko2024improving}. Nevertheless, \citet{wang2021rethinking} argue that the regularization effect of focal loss might suppress crucial information about sample difficulty, which in turn limits the effectiveness of subsequent post-hoc calibration techniques. To address this trade-off, several variants of focal loss have been proposed, such as Adaptive Focal Loss~\citep{ghosh2022adafocal} and Dual Focal Loss~\citep{tao2023dual}. Distinct from these prior approaches, we propose a loss function based on the selective classification framework. While selective classification is not originally intended for calibration, we show that its underlying principle is fundamentally aligned with minimizing calibration error.

\noindent\textbf{Selective classification} augments a standard classifier with a reject option, enabling an explicit trade-off between selective risk and coverage. This technique mitigates selective risk by withholding low-confidence predictions~\citep{geifman2017selective, geifman2018bias, ding2020revisiting, galilcan}. The Area Under the Risk-Coverage Curve (AURC) and its normalized variant, Excess-AURC (E-AURC)~\citep{geifman2018bias}, are among the most widely used evaluation metrics for selective classification systems. These metrics compute the risk associated with accepted predictions across different confidence thresholds. \citet{zhou2024novelcharacterizationpopulationarea} showed that AURC can be interpreted as a weighted risk function, where the weights are determined by the ranking induced by a CSF. Our work is motivated by the goal of establishing a connection between selective classification and model calibration. We propose a differentiable loss function that emphasize the selective risks in low-confidence region where calibration is most critical. Unlike AURC computation that requires expensive sorting operations, our approach uses a bin-based CDF approximation to achieve computational efficiency while maintaining differentiability, making it practical for large-scale deep learning applications.

\section{Preliminaries}
A common approach to training a classifier with a softmax output is to learn a function \( f : \mathcal{X} \to \Delta^{k-1}\), which maps an input \( \boldsymbol{x} \in \mathbb{R}^d \) to a probability distribution over \( k \) classes. The output \( \boldsymbol{p} = f(\boldsymbol{x}) = (p_{1}, \dots, p_{k})^{\top} \in \Delta^{k-1} \) represents the predicted class probabilities. Given an input \( \boldsymbol{x} \), let \( y' \in \{1, \dots, k\} \) denote the ground-truth label, and define $\boldsymbol{y} \in \{0,1\}^k$ its one-hot representation.

\subsection{Weighted Risk Functions} 
To train the classifier \( f \), one typically minimizes the expected loss over the joint probability distribution \( \mathbb{P}(\boldsymbol{x}, \boldsymbol{y}) \):
\begin{equation}
    \mathcal{R}(f) = \mathbb{E}[\ell(f \left( \boldsymbol{x} \right), \boldsymbol{y})],
\end{equation}
where \( \ell \) is a loss function, such as cross-entropy or squared loss. While conventional losses treat all samples uniformly, this may be suboptimal under data imbalance or varying prediction confidence. A widely adopted alternative is the focal loss, a weighted loss function that emphasizes uncertain or misclassified examples.

\begin{definition}[Focal Loss \citep{lin2017focal}]
The Focal Loss (FL) is defined as a reweighted cross-entropy that emphasizes hard, misclassified examples:
\begin{equation}
\ell_{\text{FL}}(\boldsymbol{p}, \boldsymbol{y}) = -(1 - p_{y'})^\gamma \ln(p_{y'}), \quad \gamma \ge 0,
\end{equation}
where $p_{y'}$ is the predicted probability for the ground-truth class $y'$. 
\end{definition}
FL recovers standard cross-entropy at $\gamma = 0$. For $\gamma > 0$, the loss acts as a reweighted objective where each instance is modulated by $w(p_{y'}) = (1 - p_{y'})^\gamma$. The focusing parameter $\gamma$ controls the importance distribution: as $p_{y'} \to 1$, high-confidence samples are significantly downweighted, shifting the optimization focus toward low-confidence examples.

\begin{definition}[Inverse Focal Loss \citep{wang2021rethinking}]
The Inverse Focal Loss (IFL) reverses the weighting logic of FL by emphasizing high-confidence predictions:
\begin{align}
 \ell_{\text{IFL}}(\boldsymbol{p}, \boldsymbol{y}) = - (1 + p_{y'})^{\gamma} \ln(p_{y'}), \quad \gamma \geq 0,
\end{align}
where $p_{y'}$ is the predicted probability for the ground-truth class $y'$. 
\end{definition}
Unlike focal loss, inverse focal loss prioritizes high-confidence samples. Our gradient analysis reveals that empirically that certain values of \(\gamma\) may lead to undesirable behavior during training (see Fig.~\ref{fig:gammaselection} in Appendix~\ref{appendix:gradient}). Surprisingly, despite their fundamentally opposite reweighting schemes, both strategies have been reported to improve classifier calibration.

\begin{figure}
\centering
\begin{minipage}[t]{0.49\linewidth}
    \centering
    \includegraphics[width=1.6in]{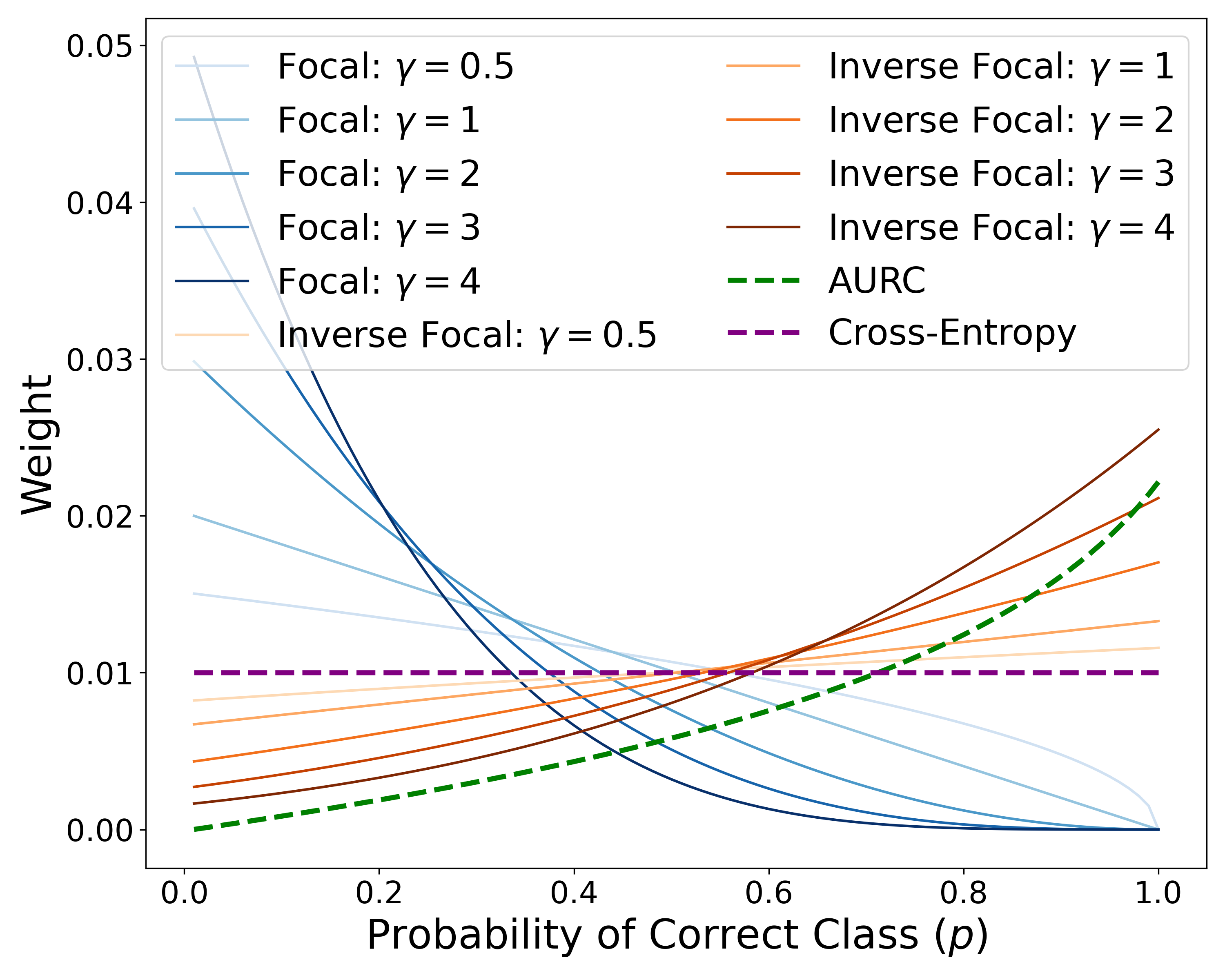}
    \subcaption{Normalized Weights}
    \label{fig:weights_comparison}
\end{minipage}%
\begin{minipage}[t]{0.49\linewidth}
    \centering
    \includegraphics[width=1.6in]{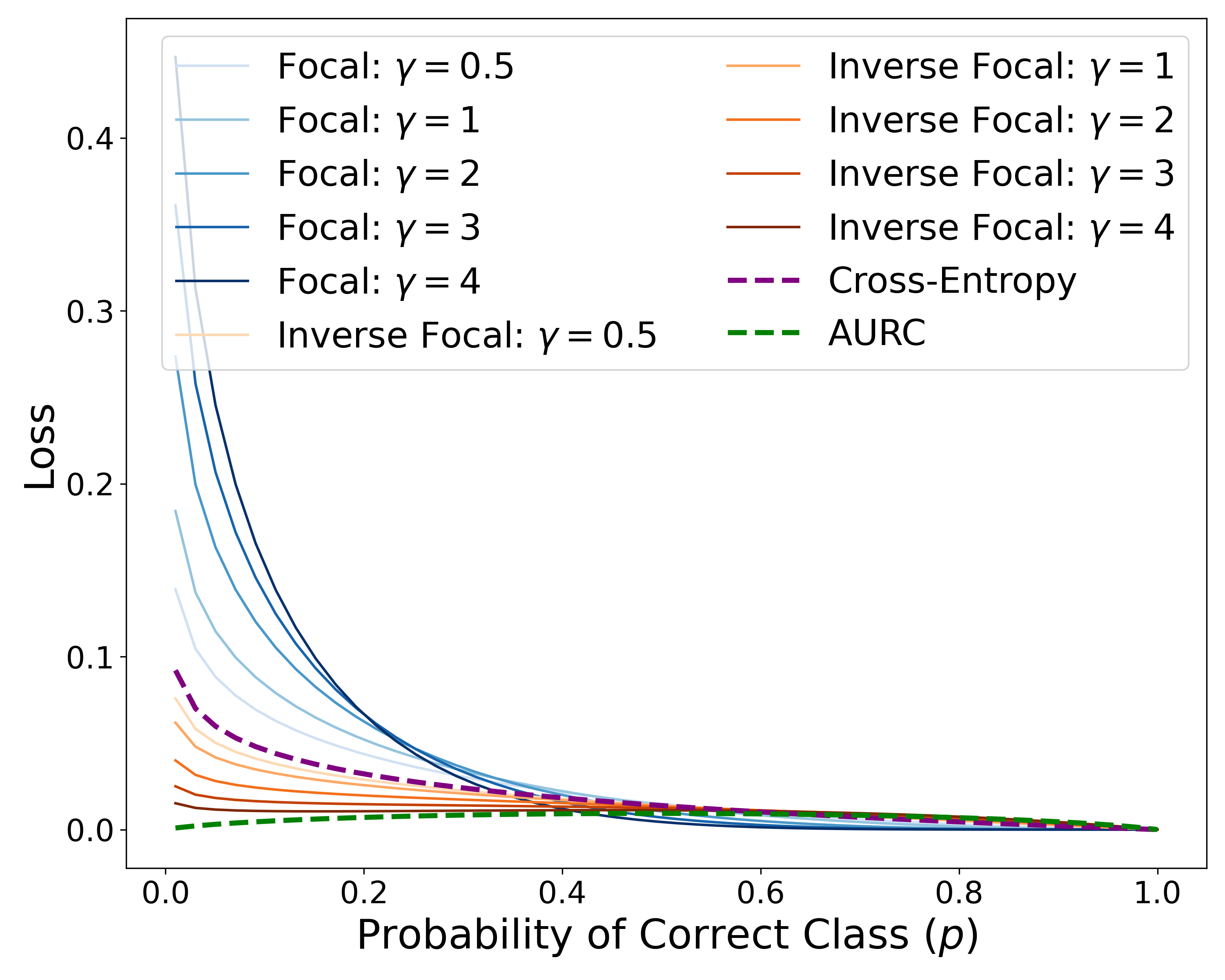}
    \subcaption{Loss } %
    \label{fig:loss_comparison}
\end{minipage}
\caption{Comparison of weight and loss (with normalized weights) behavior with respect to prediction probability \( p \). We normalize the weights in both FL and inverse focal loss for fair comparison.}
\label{fig:weightandloss}
\end{figure}

\subsection{Selective classification}
Selective classification pairs a classifier \( f \) with a \textit{confidence scoring function} (CSF) \( g : \mathcal{X} \rightarrow \mathbb{R} \) that quantifies the reliability of individual predictions. The resulting selective classifier, denoted as the pair \( (f, g) \), only outputs a prediction if the confidence score meets a specified threshold \( \tau \); otherwise, it abstains:
\begin{equation}
(f, g)(\boldsymbol{x}; \tau) := 
\begin{cases} 
f(\boldsymbol{x}) & \text{if } g(\boldsymbol{x}) \ge \tau, \\
\text{"abstain"} & \text{if } g(\boldsymbol{x}) < \tau.
\end{cases}
\label{eq:selectiveclassifier}
\end{equation}
The performance of such a classifier is evaluated via the \textit{selective risk} and \textit{coverage}. For a loss function $\ell$, the selective risk is defined as:
\begin{equation} \label{eq:selectiverisk}
   \mathcal{R}(f, g; \tau) := \frac{\mathbb{E}_{(\boldsymbol{x}, \boldsymbol{y}) \sim \mathbb{P}(\boldsymbol{x}, \boldsymbol{y})} \bigl[\ell(f(\boldsymbol{x}),\boldsymbol{y}) \cdot \mathbb{I}\left(g(\boldsymbol{x}) \geq \tau \right)\bigr]}{\Pr\left(g(\boldsymbol{x}) \geq \tau \right)},
\end{equation}
where the denominator denotes the \textit{coverage}, representing the fraction of samples for which the model provides a prediction. 

The threshold \( \tau \) governs the inherent trade-off between risk and coverage. Rather than evaluating at a fixed operating point, the Area Under the Risk-Coverage Curve (AURC) \citep{geifman2018bias} provides a holistic measure of ranking robustness by aggregating selective risk across all possible thresholds. As established in \citet{zhou2024novelcharacterizationpopulationarea}, this metric can be reformulated as a weighted risk objective, which we detail in Definition~\ref{def:aurc}.
\begin{definition}[AURC] \label{def:aurc} Let $S = g(\boldsymbol{X})$ be the random variable of confidence scores. The cumulative distribution function (CDF) of $S$ is defined as:
\begin{equation} \label{eq:cumulativeCSF}
G(s) = \Pr(S \le s) = \mathbb{E}_{\boldsymbol{x}' \sim \mathbb{P}_x} [\mathbb{I}(g(\boldsymbol{x}') \le s)].
\end{equation} 
The AURC metric can be equivalently expressed as:
\begin{equation} \label{eq:aurc}
    \text{AURC}(f) = -\mathbb{E} \Bigl[\ln \bigl(1 - G(s) \bigr) \cdot \ell(f(\boldsymbol{x}), \boldsymbol{y}) \bigr].
\end{equation}
\end{definition}
This formulation shows that AURC can be interpreted as a reweighted risk, utilizing a weighting scheme similar to that of the inverse focal loss (see Fig.~\ref{fig:weightandloss}). In this scheme, each sample is weighted by a function that depends on the CSF. Common choices for the CSFs include Maximum Softmax Probability (MSP)~\citep{hendrycks2016baseline}, Negative Entropy~\citep{liu2020energy}, and Softmax Margin~\citep{belghazi2021classifiers} as described in Appendix~Table~\ref{tb:csfs}.

\subsection{Calibration Errors}
Below, we provide the definitions of commonly used calibration errors and their empirical estimators.
\begin{definition}[Perfect Calibration]
A model \( f: \mathcal{X} \rightarrow \Delta^{k-1} \) is \emph{calibrated} if and only if \( f(\boldsymbol{x}) = \mathbb{E}[\boldsymbol{y} \mid f(\boldsymbol{x})] \) holds almost surely.
\end{definition}

\begin{definition}[\(L_\rho\) Calibration Error (\( \text{CE}_\rho \))~\citep{kumar2019verified}]
The \( L_\rho \) calibration error of \( f \), for \( \rho \ge 1 \), is defined as:
\begin{align}
\text{CE}_\rho(f) = \left( \mathbb{E} \left[ \| f(\boldsymbol{x}) - \mathbb{E}[\boldsymbol{y} \mid f(\boldsymbol{x})] \|_\rho^\rho \right] \right)^{1/\rho}, 
\end{align}
where the expectation is taken with respect to the joint distribution \( \mathbb{P}(\boldsymbol{x}, \boldsymbol{y}) \).
\end{definition}
In practice, the true value of \( \text{CE}_\rho(\cdot) \) is unobservable, as it depends on the unknown conditional distribution \( \mathbb{P}( \boldsymbol{y} \mid f(\boldsymbol{x})) \). In the multi-class setting, this conditional distribution becomes even more complex. Consequently, more research has shifted toward more tractable metrics such as top-label and class-wise calibration errors.

\begin{definition}[Top-label Calibration Error (\( \text{ECE} \)~\citep{naeini2015obtaining})]
The top-label \( \text{ECE} \) of a model \( f: \mathcal{X} \to \Delta^{k-1} \) is defined as:
\begin{align}
\text{ECE}(f) =  \mathbb{E} \bigl[ | f_{\hat{y}}(\boldsymbol{x}) - \mathbb{E}[y= \hat{y} \mid f_{\hat{y}}(\boldsymbol{x}] | \bigr]
\end{align}
where the expectation is taken with respect to \( \mathbb{P}(\boldsymbol{x}, \boldsymbol{y}) \), and \( \hat{y} = \arg\max_{c\in [k]} f_c(\boldsymbol{x}) \) is the predicted top-label for the input \( \boldsymbol{x} \).
\end{definition}
This is referred to as the \emph{Expected Calibration Error }(ECE)~\citep{guo2017calibration}. For this quantity, \citet{naeini2015obtaining} proposed an empirical estimator that uses a binning scheme to estimate it, denoted by \(\widehat{\text{ECE}}_B\).

\begin{definition}[Binned estimator $\widehat{\text{ECE}}_{B}$~\citep{naeini2015obtaining}]
Let the binning scheme be
$B=\{I_{b} \big|b=1,\dots,m\}$, the binned estimator is denoted as:
   \begin{align}
         \widehat{\text{ECE}}_{B}(f) = \sum_{b=1}^{m} \frac{n_b}{n} \left| \text{acc}_b (f) - \text{conf}_b(f) \right|
   \end{align}
   where $n_b$ is the number of samples falling into bin $I_{b}$, $n$ is the total number of samples, $\text{conf}_b(f)$ is the average predicted confidence in bin $ I_{b} $, and $\text{acc}_b(f)$ is the accuracy in that bin.
\end{definition}
Beyond the top prediction, the \emph{class-wise calibration error} (cwECE) extends this notion to all category-specific margins. 
\begin{definition}[Class-Wise Calibration Error (\( \text{cwECE}\)~\citep{panchenko2022class})]
The cwECE is the arithmetic mean of the marginal calibration errors:
\begin{align}
\text{cwECE}(f) = \frac{1}{k} \sum_{c=1}^{k} \mathbb{E} \bigl[ \left| f_c(\boldsymbol{x}) - \Pr(y = c \mid f_c(\boldsymbol{x})) \right| \bigr]
\end{align}
where \( f_c(\boldsymbol{x}) \) is the predicted probability for class \( c \).
\end{definition}
We follow related works~\citep{gupta2022top} in defining this metric as the average calibration error across classes. Similar to ECE, it can be estimated empirically using a binning scheme \( B \), resulting in a bin-based estimator \( \widehat{\text{cwECE}}_B(f) \).
\begin{definition}[Binned estimator $\widehat{\text{cwECE}}_{B}$] \label{def:bincwece}
Let $m_c$ be the number of bins for class $c$ and the binning scheme be
$B=\{I_{c,b} \big|\ c=1,\dots,k; b=1,\dots,m_c\}$, The binned estimator for \( \rho = 1 \) is given by:
\begin{align}
\widehat{\mathrm{cwECE}}_{B}(f)
& =\frac{1}{k} \sum_{c=1}^{k}\frac1n\sum_{b=1}^{m_c}
n_{c,b}\Bigl|\text{acc}_{c,b}
-\text{conf}_{c,b}\Bigr|  \label{eq:bincwece}
\end{align}
where 
\begin{align}
    \text{acc}_{c,b} = \frac{1}{n_{c,b}}\!\!\sum_{i\in I_{c,b}}\!\!\mathbb{I}(y_i^\prime =c), \quad   \text{conf}_{c,b} =  \frac{1}{n_{c,b}}\!\!\sum_{i\in I_{c,b}}\!f_c(\boldsymbol{x}_i) , \nonumber
\end{align}
\( n_{c,b} \) is the number of samples in bin \( I_{c,b} \), and \( n \) is the total number of samples.
\end{definition}

\section{A Differentiable Loss Function Derived from Selective Classification}
In this section, we begin by presenting some existing results related to calibration error—as presented in Prop.~\ref{prop:ece-lb}. These results elucidate how calibration error can be reduced for a pre-trained model and motivate a deeper exploration of the connection between selective classification and calibration. Building on these insights, we propose a differentiable loss function derived from selective classification. The formulation is fully differentiable—enabled by a bin-based CDF approximation and supports arbitrary CSFs.
\subsection{Linking Calibration Error Minimization to Selective Classification}
\begin{proposition}[Binning underestimates ECE {\citep{Ma2021a, kumar2019verified}}]\label{prop:ece-lb}
For a classifier $f:\mathcal X \to \Delta^{k-1}$, the top‑label ECE satisfies the following lower bound over all possible binning schemes $B \in \mathcal{B}$:
\begin{equation}
  \operatorname{ECE}(f) \ge \sup_{B\in\mathcal{B}} \widehat{\operatorname{ECE}}_{B}(f).
\end{equation}
\end{proposition}
As established in \citet{Ma2021a}, a calibrated model can be constructed using a binary classifier that separates correct from incorrect predictions. We reinterpret this through the lens of selective classification, yielding the following constructive result.
\begin{proposition}[Optimal Calibration via Selective Fallback {\citep[Prop.~3]{Ma2021a}}]\label{prop:constructece}
Let \( g \) be a CSF that perfectly distinguishes correct from incorrect predictions via threshold \( \tau \). For a model \( f: \mathcal{X} \rightarrow \Delta^{k-1} \), we can construct a modified model \( f' \) such that 
\begin{align}
\sup_{B \in \mathcal{B}} \widehat{\operatorname{ECE}}_B(f') \le \sup_{B \in \mathcal{B}} \widehat{\operatorname{ECE}}_B(f), \nonumber
\end{align} where the rectified prediction \( f'_{c}(\boldsymbol{x}) \) is:
\begin{equation}
f'_{c}(\boldsymbol{x}) = 
\begin{cases}
\mathbb{I}(c=y') & \text{if } g(\boldsymbol{x}) \ge \tau, \\
1/k & \text{if } g(\boldsymbol{x}) < \tau,
\end{cases}
\end{equation}
and \( y' \in \{1, \dots, k\} \) denotes the ground-truth label.
\end{proposition}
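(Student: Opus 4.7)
The plan is to compute $\sup_{B\in\mathcal{B}} \widehat{\mathrm{ECE}}_B(f')$ directly, show that it equals $\widehat{\mathrm{err}}(f)/k$ for every binning scheme, and then invoke Lemma~\ref{prop:ece-lb} to close the inequality. The key structural feature of $f'$ is that the top confidence $\max_c f'_c(\boldsymbol{x})$ takes only two values: $1$ on the accept set $A = \{\boldsymbol{x}_i : g(\boldsymbol{x}_i) \ge \tau\}$ and $1/k$ on its complement $\bar{A}$. Because $g$ perfectly separates correct from incorrect predictions, every sample in $A$ is correctly classified ($\hat{y}_i = y_i'$) and every sample in $\bar{A}$ is misclassified, so $|\bar{A}|/n = \widehat{\mathrm{err}}(f)$.

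First I would fix an arbitrary binning scheme $B$ and split each bin $I_b$ into its accept and abstain components, writing $n_b^{(1)}$, $n_b^{(2)}$ for the respective counts and $n_b = n_b^{(1)} + n_b^{(2)}$. A bin consisting only of accept-set samples satisfies $\mathrm{conf}_b = \mathrm{acc}_b = 1$ and therefore contributes $0$. A bin consisting only of abstain-set samples has $\mathrm{conf}_b = 1/k$ and $\mathrm{acc}_b = 0$, so it contributes $(n_b/n)\cdot(1/k) = n_b^{(2)}/(kn)$. For a mixed bin, a direct calculation gives $\mathrm{conf}_b - \mathrm{acc}_b = (n_b^{(1)} + n_b^{(2)}/k)/n_b - n_b^{(1)}/n_b = n_b^{(2)}/(k n_b)$, and the contribution is again $n_b^{(2)}/(kn)$. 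Summing over bins collapses to $\widehat{\mathrm{ECE}}_B(f') = |\bar{A}|/(kn) = \widehat{\mathrm{err}}(f)/k$, independently of $B$, hence $\sup_B \widehat{\mathrm{ECE}}_B(f') = \widehat{\mathrm{err}}(f)/k$.

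Combining this identity with Lemma~\ref{prop:ece-lb}, which gives $\sup_B \widehat{\mathrm{ECE}}_B(f) \ge \widehat{\mathrm{err}}(f)/k$, immediately yields $\sup_B \widehat{\mathrm{ECE}}_B(f') \le \sup_B \widehat{\mathrm{ECE}}_B(f)$, as claimed. Since $f$ and $f'$ share the same argmax on $A$ and $f'$ only rescales the probability vector on $\bar{A}$, the empirical error rates of $f$ and $f'$ coincide, and the bound can be read off on either side.

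The main obstacle I anticipate is the treatment of the argmax on the abstain set, where $f'$ is uniform and the top class is ambiguous. A rigorous proof must commit to a tiebreaking convention; the cleanest choice is to inherit $\hat{y}_i$ from $f$, so that abstain-set predictions remain incorrect and $\mathrm{acc}_b = 0$ on abstain-only bins. Under a random-tie convention the abstain-set accuracy would be $1/k$, giving $\widehat{\mathrm{ECE}}_B(f') = 0$ and an even stronger conclusion, so the deterministic convention used above is the conservative (worst-case) one and is sufficient for the stated inequality. The only remaining technicality is verifying the sign inside the absolute value in the mixed-bin case so that it simplifies to the clean form $n_b^{(2)}/(kn_b)$.
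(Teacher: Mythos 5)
The paper cites this result from Ma et al.\ (2021, Prop.~3) and does not supply its own proof in the appendix, so there is no internal argument to compare against. On its own merits your proof is correct and self-contained. Decomposing each bin of an arbitrary scheme $B$ into its accept and abstain parts shows that $\widehat{\operatorname{ECE}}_B(f')$ collapses to $\widehat{\operatorname{err}}(f)/k$ independently of $B$, and chaining this identity with the lower bound $\sup_{B\in\mathcal{B}} \widehat{\operatorname{ECE}}_B(f) \ge \widehat{\operatorname{err}}(f)/k$ from Lemma~\ref{prop:ece-lb} closes the inequality. The mixed-bin arithmetic is right, and the sign inside the absolute value is nonnegative because accept samples contribute identically to $\mathrm{conf}_b$ and $\mathrm{acc}_b$ while abstain samples add $1/k$ to confidence and $0$ to accuracy, so each bin contributes exactly $n_b^{(2)}/(kn)$. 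Your tiebreaking discussion is also sound: inheriting $\hat y_i$ from $f$ keeps $\widehat{\operatorname{err}}(f') = \widehat{\operatorname{err}}(f)$ and is the conservative choice. The only minor overstatement is that under a random-tie convention $\widehat{\operatorname{ECE}}_B(f')=0$ holds in expectation over the tiebreaks rather than pointwise for a finite sample; since you commit to the deterministic convention, this does not affect the argument.
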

This proposition suggests that ECE is minimized by polarizing confidence: maximizing it for correct samples while collapsing it to a uniform distribution for incorrect ones. Notably, the fallback to a uniform prediction when \( g(\boldsymbol{x}) < \tau \) mirrors the abstention mechanism in selective classification. This logic can be extended naturally to cwECE, which is formally defined as the arithmetic mean of marginal calibration errors. By reinterpreting the class-wise problem as a set of $k$ independent binary calibration tasks, we can apply the optimal rectification rule to each marginal component. Consequently, utilizing CSFs to refine the binary calibration of each class-specific margin minimizes the aggregate class-wise error as a principled consequence of optimizing its constituent parts.

 \subsection{Selective Risk Loss} 
Following Prop.~\ref{prop:ece-lb}, minimizing calibration error necessitates suppressing confidence for incorrect predictions while enhancing it for correct ones. While the latter is naturally addressed by the loss function i.e. cross-entropy during training, the former remains under-explored in trainable calibration, and this is the issue we aim to address in this paper from the perspective of selective classification. \citep{zhou2024novelcharacterizationpopulationarea} shows that AURC is a confidence-ranking–based metric, derived by integrating the selective risk over all confidence thresholds. It assigns extremely large weights to high-confidence examples, even though in most cases the relative ranking among such high-confidence examples is not very meaningful. To address this, we propose an AURC-like loss that restricts the focus to the uncertain region deliberately ignoring ranking noise among high-confidence examples to improve both calibration robustness and computational efficiency.

% As a reweighted risk function based on confidence ranking~\citep{zhou2024novelcharacterizationpopulationarea}, AURC tends to over-prioritize high-confidence samples; however, the relative ordering in these certain regions is typically uninformative for calibration. To address this, we propose an AURC-like loss that restricts the focus to the uncertain region deliberately ignoring ranking noise among high-confidence examples to improve both calibration robustness and computational efficiency.

Let us denote the low confidence region \(\mathcal{X}_\tau\) as $\mathcal{X}_\tau = \{ \boldsymbol{x} \mid g(\boldsymbol{x}) < \tau \}.$ Then, we focus on the selective risk corresponds to the samples \(\tilde{x}\) drawn from the distribution
\begin{equation}
\tilde{\boldsymbol{x}} \sim \mathbb{P}(\boldsymbol{x} \mid g(\boldsymbol{x}) < \tau) := \mathbb{P}_{\mathcal{X}_\tau},
\end{equation}
which leads to the selective area under risk coverage curve:
\begin{equation} \label{def:AUdef} 
\text{AU}_{\mathbb{P}_{\mathcal{X}_\tau}}(f) = \mathbb{E}_{\tilde{\boldsymbol{x}} \sim \mathbb{P}_{\mathcal{X}_\tau}} \left[ \frac{ \mathbb{E}_{\mathbb{P}(x, y)} \left[ \ell(f(\boldsymbol{x}), \boldsymbol{y}) \mathbb{I}[g(\boldsymbol{x}) \ge g(\tilde{\boldsymbol{x}})] \right] }{ \mathbb{E}_{\boldsymbol{x}' \sim \mathbb{P}_{x}} \mathbb{I}[g(\boldsymbol{x}') \ge g(\tilde{\boldsymbol{x}})] } \right].
\end{equation}Noticing that the expectation in the numerator can be swapped with the expectation outside, the equation above can then be written as:
\begin{align} \label{eq:pupulationaurc1}
    \text{AU}_{\mathbb{P}_{\mathcal{X}_\tau}}(f) =  \mathbb{E}_{\mathbb{P}(\boldsymbol{x}, \boldsymbol{y})} \left[ \beta(\boldsymbol{x}) \ell\left(f \left(\boldsymbol{x}\right),\boldsymbol{y}\right)         \right]
\end{align}
where 
\begin{align} \label{eq:defalpha}
\beta(\boldsymbol{x}) =  \mathbb{E}_{\tilde{\boldsymbol{x}}\sim \mathbb{P}_{\mathcal{X}_\tau}} \left(\frac{\mathbb{I}\left[g(\boldsymbol{x}) \ge g(\tilde{\boldsymbol{x}})\right] }{ \mathbb{E}_{\boldsymbol{x}^\prime \sim \mathbb{P}(\boldsymbol{x} )} \mathbb{I}\left[g(\boldsymbol{x}^\prime) \ge g(\tilde{\boldsymbol{x}})\right]} \right) 
\end{align}

\begin{proposition}[Equivalent Expression for $\beta(\boldsymbol{x})$] ~\label{prop:beta}
Let $G(s) = \Pr(g(\boldsymbol{x}) \le s)$ be the CDF of the confidence scores, then the weighting function $\beta(\boldsymbol{x})$ defined in Eq.~\eqref{eq:defalpha} admits the following closed-form expression:
\begin{equation} \label{eq:equivalentbeta}
\beta(\boldsymbol{x}) =
\begin{cases}
-C \ln(1 - G(\tau)) & \text{if } s \ge \tau, \\
-C \ln(1 - G(s)) & \text{if } s< \tau.
\end{cases}
\end{equation}
where $C= \frac{1}{G(\tau)}$ and  $ s=g(\boldsymbol{x})$ is the confidence score.
\end{proposition} 
\begin{proof}
See Appendix~\ref{proof:beta}.
\end{proof}
This formulation implies that samples within the high-confidence region ($s \ge \tau$) are weighted equally, while samples in the uncertain region are prioritized according to their relative rank in the confidence distribution. 

\noindent\textbf{[Bin-based CDF Approximation.]} Let $s_i = g(\boldsymbol{x}_i)$ denote the confidence scores. We approximate the CDF $G(s)$ in~\eqref{eq:cumulativeCSF} using a soft binning approach. Given bin edges \(\mathcal{E} = \{e_0, e_1, \ldots, e_M\}\) with \(e_0 < \cdots < e_M\), the corresponding bin centers are defined as:
\begin{equation}
c_m = \frac{e_m + e_{m+1}}{2}, \quad m = 0, \dots, M-1.
\end{equation}
For a set of confidence scores \(\{s_i\}_{i=1}^n\), let \(h_m\) denote the sample count in bin $I_m = [e_m,e_{m+1})$: $h_m = \sum_{i=1}^n \mathbb{I}[e_m \le g(\boldsymbol{x}_i) < e_{m+1}]$. To ensure differentiability for gradient-based optimization, we approximate the indicator function using a sigmoid-based smoothed step function. The estimated CDF at any score $s$ is:
\vspace{-1mm}
\begin{equation}
\widehat{G}_\nu(s) = \frac{1}{n} \sum_{m=0}^{M-1} h_m \, \sigma\!\left(\frac{s - c_m}{\nu}\right), 
\quad \sigma(t) = \frac{1}{1 + e^{-t}},
\label{eq:softcdf}
\end{equation}where \(\nu > 0\) is the smoothing parameter. A detailed discussion of this parameter is provided in Appendix~\ref{appsec:auloss}. \\

\noindent\textbf{[Finite-Sample Estimator.]} Given a threshold \(\tau\) defining the low-confidence region \(\mathcal{X}_\tau\), we  $\widehat{\beta}_i$ by substituting the smoothed CDF $\widehat{G}_\nu(s)$ into Eq.~\eqref{eq:equivalentbeta} to obtain $\widehat{\beta}_i$:
\begin{equation} \label{eq:hatbeta}
\hat{\beta}(\boldsymbol{x}) =
\begin{cases}
-C \ln(1 - \widehat{G}_\nu(\tau)) & \text{if } s_i \ge \tau, \\
-C \ln(1 - \widehat{G}_\nu(s_i)) & \text{if } s_i< \tau.
\end{cases}
\end{equation}
where $C= \frac{1}{\widehat{G}_\nu(\tau)}$ and  $ s_i=g(\boldsymbol{x}_i)$ is the confidence score. Then substituting $\widehat{\beta}_i$ into Eq.~\eqref{eq:pupulationaurc1} yields the differentiable finite-sample estimator:
\begin{equation} 
\widehat{\mathrm{AU}}_{\mathbb{P}_{\mathcal{X}_\tau}}(f) = \frac{1}{n} \sum_{i=1}^n \widehat{\beta}_i \cdot \ell \big(f(\boldsymbol{x}_i), \boldsymbol{y}_i\big).
\label{eq:auloss}
\end{equation}
\vspace{-5mm}
\begin{algorithm}[!htbp]
\caption{Optimization with $\widehat{\mathrm{AU}}_{\mathbb{P}_{\mathcal{X}_\tau}}$ loss}
\label{alg:au_update}
\begin{algorithmic}[1]
\STATE \textbf{Input:} Training set \( \mathcal{D} = \{(\boldsymbol{x}_i,\boldsymbol{y}_i)\}_{i=1}^n \), model \( f_\theta \), CSF \( g(\cdot) \), bin edges \(\mathcal{E}\), smoothing parameter \(\nu\), low-confidence threshold \(\tau\), learning rate \(\eta\)
\STATE \textbf{Forward Pass:}
\STATE \quad Compute \( \boldsymbol{p}_i = f_{\theta}(\boldsymbol{x}_i) \) and $ s_i = g(\boldsymbol{p}_i)$ 
\STATE \quad Estimate $\widehat{G}_\nu(\cdot)$ using current batch via Eq.~\eqref{eq:softcdf}
\STATE \quad Compute \( \widehat{\mathrm{AU}}_{\mathbb{P}_{\mathcal{X}_\tau}}\) via Eq.~\eqref{eq:auloss}
\STATE \textbf{Backward Pass:}
\FOR{each \( (\boldsymbol{x}_i, \boldsymbol{y}_i) \) in batch}
\STATE Compute \( \nabla_{\boldsymbol{p}_i} \widehat{\mathrm{AU}}_{\mathbb{P}_{\mathcal{X}_\tau}} \) via Eq.~\eqref{eq:gradientAU}
    \STATE \( \nabla_\theta \widehat{\mathrm{AU}}_{\mathbb{P}_{\mathcal{X}_\tau}}\mathrel{+}= (\nabla_{\boldsymbol{p}_i} \widehat{\mathrm{AU}}_{\mathbb{P}_{\mathcal{X}_\tau}})^\top \nabla_\theta \boldsymbol{p}_i \)
\ENDFOR
\STATE \textbf{Parameter Update:} \( \theta \gets \theta - \eta \nabla_\theta \widehat{\mathrm{AU}}_{\mathbb{P}_{\mathcal{X}_\tau}} \)
\end{algorithmic}
\end{algorithm}
% \vspace{-1mm}

Unlike the original AURC computation, which requires sorting all $n$ samples based on the CSF and thus incurs a complexity of $\mathcal{O}(n \log n)$, our proposed bin-based loss bypasses the sorting bottleneck. Instead, it approximates the CDF of confidence scores using $M$ bins, resulting in a computational cost of $\mathcal{O}(nM)$. Furthermore, our method remains more scalable than kernel-based calibration objectives, which typically suffer from quadratic $\mathcal{O}(n^2)$ complexity. As with AURC, the efficacy of this loss is coupled with the choice of CSF $g$, as we show in later sections, different CSFs may lead to different calibration performance. Crucially, distinct from the point-wise weighting in Focal Loss, our proposed loss determines the importance $\beta(\boldsymbol{x})$ via the relative rank of confidence scores within the distribution $G(s)$. This rank-based prioritization ensures that the loss remains robust to variations in score scales and focuses the optimization on the most critical uncertain regions.

\section{Experiments}

\begin{table*}[!htbp]
\caption{Mean $\widehat{\mathrm{cwECE}}$ $\times 10^{2}$ ($\downarrow$) for original models and after applying post-hoc calibration methods including TS. Temperature values are included in parentheses, and the temperature $T$ is selected by minimizing ECE.}
\centering
\renewcommand{\arraystretch}{1.10}
\setlength{\tabcolsep}{3pt}
\resizebox{1.0\textwidth}{!}{%
\begin{tabular}{@{}llcccccccc@{}}
\toprule
\textbf{Dataset} & \textbf{Model}       & \textbf{XE} & \textbf{FL-53} & \textbf{Inv. FL} & \textbf{Dual} & \textbf{KDE-XE} & \textbf{AURC} & \textbf{$\mathbf{\text{AU}_{\mathbb{P}_{\mathcal{X}_\tau}}}$} \\
\midrule
\rowcolor{group1} \multicolumn{1}{l}{\textbf{CIFAR10}}   & ResNet50 & 0.665  & 0.881  & 0.849  & \textbf{0.478}  & 0.677  & 0.575  & 0.581 \\
\rowcolor{group1} \multicolumn{1}{l}{{}} & \hspace{1em}+TS    & 0.597  (1.1) & 0.497  (0.9) & 0.778  (1.2) & \textbf{0.466}  (1.0) & 0.605  (1.1) & 0.577  (1.0) & 0.495  (1.2) \\
\rowcolor{group2} \multicolumn{1}{l}{{}} & ResNet110         & 0.668  & 0.933  & 1.009  & 0.620  & 0.618  & 0.558  & \textbf{0.518} \\
\rowcolor{group2} \multicolumn{1}{l}{{}} & \hspace{1em}+TS    & 0.604  (1.1) & 0.514  (0.8) & 0.914  (1.3) & 0.473  (0.9) & 0.574  (1.1) & 0.587  (1.0) & \textbf{0.439}  (1.1) \\
\rowcolor{group3} \multicolumn{1}{l}{{}} & WRN               & 0.472  & 1.231  & 0.534  & 0.899  & 0.467  & 0.489  & \textbf{0.432}  \\
\rowcolor{group3} \multicolumn{1}{l}{{}} & \hspace{1em}+TS    & 0.368  (1.2) & 0.734  (0.9) & 0.446  (1.2) & 0.758  (1.0) & 0.368  (1.2) & 0.438  (1.1) & \textbf{0.340}  (1.3) \\
\rowcolor{group4} \multicolumn{1}{l}{{}} & PreResNet56       & 0.611  & 1.176  & 0.824  & 0.577  & 0.611  & 0.611  & \textbf{0.363}\\
\rowcolor{group4} \multicolumn{1}{l}{{}} & \hspace{1em}+TS    & 0.592  (1.0) & 0.407  (0.7) & 0.805  (1.1) & 0.448  (0.9) & 0.592  (1.0) & 0.643  (1.0) & \textbf{0.335}  (1.1)\\
\midrule
\rowcolor{group1} \textbf{CIFAR100} & ResNet50           & 0.235  & 0.215  & 0.272  & 0.213  & 0.252  & 0.237  & \textbf{0.180} \\
\rowcolor{group1}                   & \hspace{1em}+TS    & 0.196  (1.3) & 0.182  (0.9) & 0.204  (1.4) & 0.189  (0.9) & 0.205  (1.4) & 0.193  (1.4) & \textbf{0.175}  (1.1) \\
\rowcolor{group2}                  & ResNet110         & 0.233  & 0.207  & 0.270  & 0.212  & 0.243  & 0.228  & \textbf{0.175}  \\
\rowcolor{group2}                  & \hspace{1em}+TS    &0.198  (1.3) & 0.181  (0.9) & 0.199  (1.4) & 0.181  (0.9) & 0.196  (1.4) & 0.183  (1.3) & \textbf{0.177}  (1.1)  \\
\rowcolor{group3}                  & WRN               &0.202  & 0.236  & 0.209  & 0.211  & 0.206  & 0.198  & \textbf{0.192}  \\
\rowcolor{group3}                  & \hspace{1em}+TS    & 0.194  (1.2) & \textbf{0.185}  (0.8) & 0.205  (1.3) & 0.189  (0.9) & 0.197  (1.2) & 0.196  (1.2) & 0.196  (1.1)  \\
\rowcolor{group4}                  & PreResNet56       & 0.233  & 0.208  & 0.267  & 0.190  & 0.232  & 0.198  & \textbf{0.177}  \\
\rowcolor{group4}                  & \hspace{1em}+TS    &0.179  (1.4) & 0.206  (0.9) & 0.182  (1.4) & 0.190  (1.0) & 0.178  (1.4) & 0.182  (1.2) & \textbf{0.175}  (1.1)  \\
\midrule
\rowcolor{group1} \multicolumn{1}{l}{{\textbf{Tiny-ImageNet}}} & ViT-Small      & 0.126  & 0.143  & 0.129  & 0.128  & 0.123  & \textbf{0.120}  & 0.121  \\
\rowcolor{group1} \multicolumn{1}{l}{{}} & \hspace{1em}+TS  &  0.121  (1.2) & 0.135  (0.9) & 0.121  (1.3) & 0.129  (1.1) & 0.121  (1.1) & 0.121  (1.0) & \textbf{0.120}  (1.0)   \\
\rowcolor{group2} \multicolumn{1}{l}{{}} & Swin-Tiny       & 0.109  & 0.155  & \textbf{0.108}  & 0.113  & 0.109  & 0.125  & 0.116  \\
\rowcolor{group2} \multicolumn{1}{l}{{}} & \hspace{1em}+TS  & 0.110  (1.0) & 0.119  (0.8) & \textbf{0.109}  (1.1) & 0.111  (1.0) & 0.110  (1.1) & 0.123  (0.9) & 0.114  (1.0) \\
\bottomrule
\end{tabular}}
% \vspace{-2mm}
\label{tab:cwece_ts_split}
\end{table*}

\begin{figure*}[!htbp]
\centering
\begin{minipage}[b]{0.16\linewidth}
  \centering
  \resizebox{\linewidth}{!}{%
    % This file was created by tikzplotlib v0.9.8.
\begin{tikzpicture}

\definecolor{color0}{rgb}{0.12156862745098,0.466666666666667,0.705882352941177}

\begin{groupplot}[group style={group size=1 by 2, vertical sep=2cm}]
\nextgroupplot[
axis background/.style={fill=white!89.8039215686275!black},
axis line style={white},
legend cell align={left},
legend style={
  fill opacity=0.8,
  draw opacity=1,
  text opacity=1,
  at={(0.03,0.97)},
  anchor=north west,
  draw=white!80!black,
  fill=white!89.8039215686275!black
},
tick align=outside,
tick pos=left,
x grid style={white},
xlabel={Confidence},
xmajorgrids,
xmin=0, xmax=1,
xtick style={color=white!33.3333333333333!black},
xtick={0,0.2,0.4,0.6,0.8,1},
xticklabels={0.0,0.2,0.4,0.6,0.8,1.0},
y grid style={white},
ylabel={Accuracy},
ymajorgrids,
ymin=0, ymax=1,
ytick style={color=white!33.3333333333333!black},
ytick={0,0.2,0.4,0.6,0.8,1},
yticklabels={0.0,0.2,0.4,0.6,0.8,1.0}
]
\draw[draw=black,fill=color0,very thin] (axis cs:-6.93889390390723e-18,0) rectangle (axis cs:0.1,0.05);
\draw[draw=black,fill=color0,very thin] (axis cs:0.1,0) rectangle (axis cs:0.2,0.15);
\draw[draw=black,fill=color0,very thin] (axis cs:0.2,0) rectangle (axis cs:0.3,0.263157894736842);
\draw[draw=black,fill=color0,very thin] (axis cs:0.3,0) rectangle (axis cs:0.4,0.423076923076923);
\draw[draw=black,fill=color0,very thin] (axis cs:0.4,0) rectangle (axis cs:0.5,0.560714285714286);
\draw[draw=black,fill=color0,very thin] (axis cs:0.5,0) rectangle (axis cs:0.6,0.634191176470588);
\draw[draw=black,fill=color0,very thin] (axis cs:0.6,0) rectangle (axis cs:0.7,0.74468085106383);
\draw[draw=black,fill=color0,very thin] (axis cs:0.7,0) rectangle (axis cs:0.8,0.881578947368421);
\draw[draw=black,fill=color0,very thin] (axis cs:0.8,0) rectangle (axis cs:0.9,0.967201674808095);
\draw[draw=black,fill=color0,very thin] (axis cs:0.9,0) rectangle (axis cs:1,0.996346886912325);
\draw[draw=black,fill=red,opacity=0.6,very thin] (axis cs:-6.93889390390723e-18,0.05) rectangle (axis cs:0.1,0.05);
\draw[draw=black,fill=red,opacity=0.6,very thin] (axis cs:0.1,0.15) rectangle (axis cs:0.2,0.15);
\draw[draw=black,fill=red,opacity=0.6,very thin] (axis cs:0.2,0.263157894736842) rectangle (axis cs:0.3,0.275920282853277);
\draw[draw=black,fill=red,opacity=0.6,very thin] (axis cs:0.3,0.423076923076923) rectangle (axis cs:0.4,0.361916334583209);
\draw[draw=black,fill=red,opacity=0.6,very thin] (axis cs:0.4,0.560714285714286) rectangle (axis cs:0.5,0.456773817219905);
\draw[draw=black,fill=red,opacity=0.6,very thin] (axis cs:0.5,0.634191176470588) rectangle (axis cs:0.6,0.551230576327618);
\draw[draw=black,fill=red,opacity=0.6,very thin] (axis cs:0.6,0.74468085106383) rectangle (axis cs:0.7,0.651467973458851);
\draw[draw=black,fill=red,opacity=0.6,very thin] (axis cs:0.7,0.881578947368421) rectangle (axis cs:0.8,0.752674729416245);
\draw[draw=black,fill=red,opacity=0.6,very thin] (axis cs:0.8,0.967201674808095) rectangle (axis cs:0.9,0.858105590604688);
\draw[draw=black,fill=red,opacity=0.6,very thin] (axis cs:0.9,0.996346886912325) rectangle (axis cs:1,0.96540744067791);
\addplot [semithick, red, dash pattern=on 5.55pt off 2.4pt]
table {%
0 0
1 1
};
\addlegendentry{Perfect Calibration}
\addlegendimage{area legend,draw=black, fill=color0, mark size=3}
\addlegendentry{Output}
\addlegendimage{area legend, draw=black, fill=red, opacity=0.6}
\addlegendentry{Gap}
\end{groupplot}

\draw ({$(current bounding box.south west)!0.95!(current bounding box.south east)$}|-{$(current bounding box.south west)!0.18!(current bounding box.north west)$}) node[
  scale=1.2,
  fill=white,
  draw=none,
  line width=0.4pt,
  inner sep=2.9pt,
  fill opacity=0.9,
  anchor=south east,
  text=black,
  rotate=0.0
]{ECE=5.83};
\end{tikzpicture}
  }
  \subcaption{FL-53}
\end{minipage}
\begin{minipage}[b]{0.16\linewidth}
  \centering
  \resizebox{\linewidth}{!}{%
    % This file was created by tikzplotlib v0.9.8.
\begin{tikzpicture}

\definecolor{color0}{rgb}{0.12156862745098,0.466666666666667,0.705882352941177}

\begin{groupplot}[group style={group size=1 by 2, vertical sep=2cm}]
\nextgroupplot[
axis background/.style={fill=white!89.8039215686275!black},
axis line style={white},
legend cell align={left},
legend style={
  fill opacity=0.8,
  draw opacity=1,
  text opacity=1,
  at={(0.03,0.97)},
  anchor=north west,
  draw=white!80!black,
  fill=white!89.8039215686275!black
},
tick align=outside,
tick pos=left,
x grid style={white},
xlabel={Confidence},
xmajorgrids,
xmin=0, xmax=1,
xtick style={color=white!33.3333333333333!black},
xtick={0,0.2,0.4,0.6,0.8,1},
xticklabels={0.0,0.2,0.4,0.6,0.8,1.0},
y grid style={white},
ylabel={Accuracy},
ymajorgrids,
ymin=0, ymax=1,
ytick style={color=white!33.3333333333333!black},
ytick={0,0.2,0.4,0.6,0.8,1},
yticklabels={0.0,0.2,0.4,0.6,0.8,1.0}
]
\draw[draw=black,fill=color0,very thin] (axis cs:-6.93889390390723e-18,0) rectangle (axis cs:0.1,0.05);
\draw[draw=black,fill=color0,very thin] (axis cs:0.1,0) rectangle (axis cs:0.2,0.15);
\draw[draw=black,fill=color0,very thin] (axis cs:0.2,0) rectangle (axis cs:0.3,0);
\draw[draw=black,fill=color0,very thin] (axis cs:0.3,0) rectangle (axis cs:0.4,0.3);
\draw[draw=black,fill=color0,very thin] (axis cs:0.4,0) rectangle (axis cs:0.5,0.384615384615385);
\draw[draw=black,fill=color0,very thin] (axis cs:0.5,0) rectangle (axis cs:0.6,0.378048780487805);
\draw[draw=black,fill=color0,very thin] (axis cs:0.6,0) rectangle (axis cs:0.7,0.488636363636364);
\draw[draw=black,fill=color0,very thin] (axis cs:0.7,0) rectangle (axis cs:0.8,0.516393442622951);
\draw[draw=black,fill=color0,very thin] (axis cs:0.8,0) rectangle (axis cs:0.9,0.565217391304348);
\draw[draw=black,fill=color0,very thin] (axis cs:0.9,0) rectangle (axis cs:1,0.968163609529833);
\draw[draw=black,fill=red,opacity=0.6,very thin] (axis cs:-6.93889390390723e-18,0.05) rectangle (axis cs:0.1,0.05);
\draw[draw=black,fill=red,opacity=0.6,very thin] (axis cs:0.1,0.15) rectangle (axis cs:0.2,0.15);
\draw[draw=black,fill=red,opacity=0.6,very thin] (axis cs:0.2,0) rectangle (axis cs:0.3,0.290134996175766);
\draw[draw=black,fill=red,opacity=0.6,very thin] (axis cs:0.3,0.3) rectangle (axis cs:0.4,0.368676760792732);
\draw[draw=black,fill=red,opacity=0.6,very thin] (axis cs:0.4,0.384615384615385) rectangle (axis cs:0.5,0.460584498368777);
\draw[draw=black,fill=red,opacity=0.6,very thin] (axis cs:0.5,0.378048780487805) rectangle (axis cs:0.6,0.554585520087219);
\draw[draw=black,fill=red,opacity=0.6,very thin] (axis cs:0.6,0.488636363636364) rectangle (axis cs:0.7,0.649214595556259);
\draw[draw=black,fill=red,opacity=0.6,very thin] (axis cs:0.7,0.516393442622951) rectangle (axis cs:0.8,0.752639370863555);
\draw[draw=black,fill=red,opacity=0.6,very thin] (axis cs:0.8,0.565217391304348) rectangle (axis cs:0.9,0.855443768527197);
\draw[draw=black,fill=red,opacity=0.6,very thin] (axis cs:0.9,0.968163609529833) rectangle (axis cs:1,0.997481431453107);
\addplot [semithick, red, dash pattern=on 5.55pt off 2.4pt]
table {%
0 0
1 1
};
\addlegendentry{Perfect Calibration}
\addlegendimage{area legend,draw=black, fill=color0, mark size=3}
\addlegendentry{Output}
\addlegendimage{area legend, draw=black, fill=red, opacity=0.6}
\addlegendentry{Gap}
\end{groupplot}

\draw ({$(current bounding box.south west)!0.95!(current bounding box.south east)$}|-{$(current bounding box.south west)!0.18!(current bounding box.north west)$}) node[
  scale=1.2,
  fill=white,
  draw=none,
  line width=0.4pt,
  inner sep=2.9pt,
  fill opacity=0.9,
  anchor=south east,
  text=black,
  rotate=0.0
]{ECE=3.92};
\end{tikzpicture}
  }
  \subcaption{Inverse Focal}
\end{minipage}
\begin{minipage}[b]{0.16\linewidth}
  \centering
  \resizebox{\linewidth}{!}{%
    % This file was created by tikzplotlib v0.9.8.
\begin{tikzpicture}

\definecolor{color0}{rgb}{0.12156862745098,0.466666666666667,0.705882352941177}

\begin{groupplot}[group style={group size=1 by 2, vertical sep=2cm}]
\nextgroupplot[
axis background/.style={fill=white!89.8039215686275!black},
axis line style={white},
legend cell align={left},
legend style={
  fill opacity=0.8,
  draw opacity=1,
  text opacity=1,
  at={(0.03,0.97)},
  anchor=north west,
  draw=white!80!black,
  fill=white!89.8039215686275!black
},
tick align=outside,
tick pos=left,
x grid style={white},
xlabel={Confidence},
xmajorgrids,
xmin=0, xmax=1,
xtick style={color=white!33.3333333333333!black},
xtick={0,0.2,0.4,0.6,0.8,1},
xticklabels={0.0,0.2,0.4,0.6,0.8,1.0},
y grid style={white},
ylabel={Accuracy},
ymajorgrids,
ymin=0, ymax=1,
ytick style={color=white!33.3333333333333!black},
ytick={0,0.2,0.4,0.6,0.8,1},
yticklabels={0.0,0.2,0.4,0.6,0.8,1.0}
]
\draw[draw=black,fill=color0,very thin] (axis cs:-6.93889390390723e-18,0) rectangle (axis cs:0.1,0.05);
\draw[draw=black,fill=color0,very thin] (axis cs:0.1,0) rectangle (axis cs:0.2,0.15);
\draw[draw=black,fill=color0,very thin] (axis cs:0.2,0) rectangle (axis cs:0.3,0.352941176470588);
\draw[draw=black,fill=color0,very thin] (axis cs:0.3,0) rectangle (axis cs:0.4,0.449438202247191);
\draw[draw=black,fill=color0,very thin] (axis cs:0.4,0) rectangle (axis cs:0.5,0.461538461538462);
\draw[draw=black,fill=color0,very thin] (axis cs:0.5,0) rectangle (axis cs:0.6,0.559375);
\draw[draw=black,fill=color0,very thin] (axis cs:0.6,0) rectangle (axis cs:0.7,0.70189701897019);
\draw[draw=black,fill=color0,very thin] (axis cs:0.7,0) rectangle (axis cs:0.8,0.783898305084746);
\draw[draw=black,fill=color0,very thin] (axis cs:0.8,0) rectangle (axis cs:0.9,0.91504854368932);
\draw[draw=black,fill=color0,very thin] (axis cs:0.9,0) rectangle (axis cs:1,0.99210560372719);
\draw[draw=black,fill=red,opacity=0.6,very thin] (axis cs:-6.93889390390723e-18,0.05) rectangle (axis cs:0.1,0.05);
\draw[draw=black,fill=red,opacity=0.6,very thin] (axis cs:0.1,0.15) rectangle (axis cs:0.2,0.15);
\draw[draw=black,fill=red,opacity=0.6,very thin] (axis cs:0.2,0.352941176470588) rectangle (axis cs:0.3,0.26837073441814);
\draw[draw=black,fill=red,opacity=0.6,very thin] (axis cs:0.3,0.449438202247191) rectangle (axis cs:0.4,0.357843224587065);
\draw[draw=black,fill=red,opacity=0.6,very thin] (axis cs:0.4,0.461538461538462) rectangle (axis cs:0.5,0.45631013417637);
\draw[draw=black,fill=red,opacity=0.6,very thin] (axis cs:0.5,0.559375) rectangle (axis cs:0.6,0.549521796032786);
\draw[draw=black,fill=red,opacity=0.6,very thin] (axis cs:0.6,0.70189701897019) rectangle (axis cs:0.7,0.65349736753195);
\draw[draw=black,fill=red,opacity=0.6,very thin] (axis cs:0.7,0.783898305084746) rectangle (axis cs:0.8,0.755163210554648);
\draw[draw=black,fill=red,opacity=0.6,very thin] (axis cs:0.8,0.91504854368932) rectangle (axis cs:0.9,0.857545783942186);
\draw[draw=black,fill=red,opacity=0.6,very thin] (axis cs:0.9,0.99210560372719) rectangle (axis cs:1,0.980021750863718);
\addplot [semithick, red, dash pattern=on 5.55pt off 2.4pt]
table {%
0 0
1 1
};
\addlegendentry{Perfect Calibration}
\addlegendimage{area legend,draw=black, fill=color0, mark size=3}
\addlegendentry{Output}
\addlegendimage{area legend, draw=black, fill=red, opacity=0.6}
\addlegendentry{Gap}
\end{groupplot}

\draw ({$(current bounding box.south west)!0.95!(current bounding box.south east)$}|-{$(current bounding box.south west)!0.18!(current bounding box.north west)$}) node[
  scale=1.2,
  fill=white,
  draw=none,
  line width=0.4pt,
  inner sep=2.9pt,
  fill opacity=0.9,
  anchor=south east,
  text=black,
  rotate=0.0
]{ECE=1.92};
\end{tikzpicture}
  }
  \subcaption{Dual Focal}
\end{minipage}
\begin{minipage}[b]{0.16\linewidth}
  \centering
  \resizebox{\linewidth}{!}{%
    % This file was created by tikzplotlib v0.9.8.
\begin{tikzpicture}

\definecolor{color0}{rgb}{0.12156862745098,0.466666666666667,0.705882352941177}

\begin{groupplot}[group style={group size=1 by 2, vertical sep=2cm}]
\nextgroupplot[
axis background/.style={fill=white!89.8039215686275!black},
axis line style={white},
legend cell align={left},
legend style={
  fill opacity=0.8,
  draw opacity=1,
  text opacity=1,
  at={(0.03,0.97)},
  anchor=north west,
  draw=white!80!black,
  fill=white!89.8039215686275!black
},
tick align=outside,
tick pos=left,
x grid style={white},
xlabel={Confidence},
xmajorgrids,
xmin=0, xmax=1,
xtick style={color=white!33.3333333333333!black},
xtick={0,0.2,0.4,0.6,0.8,1},
xticklabels={0.0,0.2,0.4,0.6,0.8,1.0},
y grid style={white},
ylabel={Accuracy},
ymajorgrids,
ymin=0, ymax=1,
ytick style={color=white!33.3333333333333!black},
ytick={0,0.2,0.4,0.6,0.8,1},
yticklabels={0.0,0.2,0.4,0.6,0.8,1.0}
]
\draw[draw=black,fill=color0,very thin] (axis cs:-6.93889390390723e-18,0) rectangle (axis cs:0.1,0.05);
\draw[draw=black,fill=color0,very thin] (axis cs:0.1,0) rectangle (axis cs:0.2,0.15);
\draw[draw=black,fill=color0,very thin] (axis cs:0.2,0) rectangle (axis cs:0.3,0.4);
\draw[draw=black,fill=color0,very thin] (axis cs:0.3,0) rectangle (axis cs:0.4,0.217391304347826);
\draw[draw=black,fill=color0,very thin] (axis cs:0.4,0) rectangle (axis cs:0.5,0.364705882352941);
\draw[draw=black,fill=color0,very thin] (axis cs:0.5,0) rectangle (axis cs:0.6,0.492146596858639);
\draw[draw=black,fill=color0,very thin] (axis cs:0.6,0) rectangle (axis cs:0.7,0.60377358490566);
\draw[draw=black,fill=color0,very thin] (axis cs:0.7,0) rectangle (axis cs:0.8,0.686695278969957);
\draw[draw=black,fill=color0,very thin] (axis cs:0.8,0) rectangle (axis cs:0.9,0.730769230769231);
\draw[draw=black,fill=color0,very thin] (axis cs:0.9,0) rectangle (axis cs:1,0.978332016702404);
\draw[draw=black,fill=red,opacity=0.6,very thin] (axis cs:-6.93889390390723e-18,0.05) rectangle (axis cs:0.1,0.05);
\draw[draw=black,fill=red,opacity=0.6,very thin] (axis cs:0.1,0.15) rectangle (axis cs:0.2,0.15);
\draw[draw=black,fill=red,opacity=0.6,very thin] (axis cs:0.2,0.4) rectangle (axis cs:0.3,0.276703956723213);
\draw[draw=black,fill=red,opacity=0.6,very thin] (axis cs:0.3,0.217391304347826) rectangle (axis cs:0.4,0.361091696697733);
\draw[draw=black,fill=red,opacity=0.6,very thin] (axis cs:0.4,0.364705882352941) rectangle (axis cs:0.5,0.455720301235423);
\draw[draw=black,fill=red,opacity=0.6,very thin] (axis cs:0.5,0.492146596858639) rectangle (axis cs:0.6,0.550612365388121);
\draw[draw=black,fill=red,opacity=0.6,very thin] (axis cs:0.6,0.60377358490566) rectangle (axis cs:0.7,0.65028412724441);
\draw[draw=black,fill=red,opacity=0.6,very thin] (axis cs:0.7,0.686695278969957) rectangle (axis cs:0.8,0.749349552418541);
\draw[draw=black,fill=red,opacity=0.6,very thin] (axis cs:0.8,0.730769230769231) rectangle (axis cs:0.9,0.856711530838257);
\draw[draw=black,fill=red,opacity=0.6,very thin] (axis cs:0.9,0.978332016702404) rectangle (axis cs:1,0.994159726973595);
\addplot [semithick, red, dash pattern=on 5.55pt off 2.4pt]
table {%
0 0
1 1
};
\addlegendentry{Perfect Calibration}
\addlegendimage{area legend,draw=black, fill=color0, mark size=3}
\addlegendentry{Output}
\addlegendimage{area legend, draw=black, fill=red, opacity=0.6}
\addlegendentry{Gap}
\end{groupplot}

\draw ({$(current bounding box.south west)!0.95!(current bounding box.south east)$}|-{$(current bounding box.south west)!0.18!(current bounding box.north west)$}) node[
  scale=1.2,
  fill=white,
  draw=none,
  line width=0.4pt,
  inner sep=2.9pt,
  fill opacity=0.9,
  anchor=south east,
  text=black,
  rotate=0.0
]{ECE=2.38};
\end{tikzpicture}
  }
  \subcaption{KDE-XE}
\end{minipage}
\begin{minipage}[b]{0.16\linewidth}
  \centering
  \resizebox{\linewidth}{!}{%
    % This file was created by tikzplotlib v0.9.8.
\begin{tikzpicture}

\definecolor{color0}{rgb}{0.12156862745098,0.466666666666667,0.705882352941177}

\begin{groupplot}[group style={group size=1 by 2, vertical sep=2cm}]
\nextgroupplot[
axis background/.style={fill=white!89.8039215686275!black},
axis line style={white},
legend cell align={left},
legend style={
  fill opacity=0.8,
  draw opacity=1,
  text opacity=1,
  at={(0.03,0.97)},
  anchor=north west,
  draw=white!80!black,
  fill=white!89.8039215686275!black
},
tick align=outside,
tick pos=left,
x grid style={white},
xlabel={Confidence},
xmajorgrids,
xmin=0, xmax=1,
xtick style={color=white!33.3333333333333!black},
xtick={0,0.2,0.4,0.6,0.8,1},
xticklabels={0.0,0.2,0.4,0.6,0.8,1.0},
y grid style={white},
ylabel={Accuracy},
ymajorgrids,
ymin=0, ymax=1,
ytick style={color=white!33.3333333333333!black},
ytick={0,0.2,0.4,0.6,0.8,1},
yticklabels={0.0,0.2,0.4,0.6,0.8,1.0}
]
\draw[draw=black,fill=color0,very thin] (axis cs:-6.93889390390723e-18,0) rectangle (axis cs:0.1,0.05);
\draw[draw=black,fill=color0,very thin] (axis cs:0.1,0) rectangle (axis cs:0.2,0);
\draw[draw=black,fill=color0,very thin] (axis cs:0.2,0) rectangle (axis cs:0.3,0.125);
\draw[draw=black,fill=color0,very thin] (axis cs:0.3,0) rectangle (axis cs:0.4,0.282608695652174);
\draw[draw=black,fill=color0,very thin] (axis cs:0.4,0) rectangle (axis cs:0.5,0.435114503816794);
\draw[draw=black,fill=color0,very thin] (axis cs:0.5,0) rectangle (axis cs:0.6,0.55440414507772);
\draw[draw=black,fill=color0,very thin] (axis cs:0.6,0) rectangle (axis cs:0.7,0.483091787439614);
\draw[draw=black,fill=color0,very thin] (axis cs:0.7,0) rectangle (axis cs:0.8,0.592885375494071);
\draw[draw=black,fill=color0,very thin] (axis cs:0.8,0) rectangle (axis cs:0.9,0.7);
\draw[draw=black,fill=color0,very thin] (axis cs:0.9,0) rectangle (axis cs:1,0.975570957845699);
\draw[draw=black,fill=red,opacity=0.6,very thin] (axis cs:-6.93889390390723e-18,0.05) rectangle (axis cs:0.1,0.05);
\draw[draw=black,fill=red,opacity=0.6,very thin] (axis cs:0.1,0) rectangle (axis cs:0.2,0.194341078400612);
\draw[draw=black,fill=red,opacity=0.6,very thin] (axis cs:0.2,0.125) rectangle (axis cs:0.3,0.281646598130465);
\draw[draw=black,fill=red,opacity=0.6,very thin] (axis cs:0.3,0.282608695652174) rectangle (axis cs:0.4,0.355309063973634);
\draw[draw=black,fill=red,opacity=0.6,very thin] (axis cs:0.4,0.435114503816794) rectangle (axis cs:0.5,0.456321502459868);
\draw[draw=black,fill=red,opacity=0.6,very thin] (axis cs:0.5,0.55440414507772) rectangle (axis cs:0.6,0.547795350996324);
\draw[draw=black,fill=red,opacity=0.6,very thin] (axis cs:0.6,0.483091787439614) rectangle (axis cs:0.7,0.651141056979912);
\draw[draw=black,fill=red,opacity=0.6,very thin] (axis cs:0.7,0.592885375494071) rectangle (axis cs:0.8,0.751641168660326);
\draw[draw=black,fill=red,opacity=0.6,very thin] (axis cs:0.8,0.7) rectangle (axis cs:0.9,0.85556810349226);
\draw[draw=black,fill=red,opacity=0.6,very thin] (axis cs:0.9,0.975570957845699) rectangle (axis cs:1,0.993177865502672);
\addplot [semithick, red, dash pattern=on 5.55pt off 2.4pt]
table {%
0 0
1 1
};
\addlegendentry{Perfect Calibration}
\addlegendimage{area legend,draw=black, fill=color0, mark size=3}
\addlegendentry{Output}
\addlegendimage{area legend, draw=black, fill=red, opacity=0.6}
\addlegendentry{Gap}
\end{groupplot}

\draw ({$(current bounding box.south west)!0.95!(current bounding box.south east)$}|-{$(current bounding box.south west)!0.18!(current bounding box.north west)$}) node[
  scale=1.2,
  fill=white,
  draw=none,
  line width=0.4pt,
  inner sep=2.9pt,
  fill opacity=0.9,
  anchor=south east,
  text=black,
  rotate=0.0
]{ECE=2.98};
\end{tikzpicture}
  }
  \subcaption{AURC}
\end{minipage}
\begin{minipage}[b]{0.16\linewidth}
  \centering
  \resizebox{\linewidth}{!}{%
    % This file was created by tikzplotlib v0.9.8.
\begin{tikzpicture}

\definecolor{color0}{rgb}{0.12156862745098,0.466666666666667,0.705882352941177}

\begin{groupplot}[group style={group size=1 by 2, vertical sep=2cm}]
\nextgroupplot[
axis background/.style={fill=white!89.8039215686275!black},
axis line style={white},
legend cell align={left},
legend style={
  fill opacity=0.8,
  draw opacity=1,
  text opacity=1,
  at={(0.03,0.97)},
  anchor=north west,
  draw=white!80!black,
  fill=white!89.8039215686275!black
},
tick align=outside,
tick pos=left,
x grid style={white},
xlabel={Confidence},
xmajorgrids,
xmin=0, xmax=1,
xtick style={color=white!33.3333333333333!black},
xtick={0,0.2,0.4,0.6,0.8,1},
xticklabels={0.0,0.2,0.4,0.6,0.8,1.0},
y grid style={white},
ylabel={Accuracy},
ymajorgrids,
ymin=0, ymax=1,
ytick style={color=white!33.3333333333333!black},
ytick={0,0.2,0.4,0.6,0.8,1},
yticklabels={0.0,0.2,0.4,0.6,0.8,1.0}
]
\draw[draw=black,fill=color0,very thin] (axis cs:-6.93889390390723e-18,0) rectangle (axis cs:0.1,0.05);
\draw[draw=black,fill=color0,very thin] (axis cs:0.1,0) rectangle (axis cs:0.2,0.15);
\draw[draw=black,fill=color0,very thin] (axis cs:0.2,0) rectangle (axis cs:0.3,0.375);
\draw[draw=black,fill=color0,very thin] (axis cs:0.3,0) rectangle (axis cs:0.4,0.371428571428571);
\draw[draw=black,fill=color0,very thin] (axis cs:0.4,0) rectangle (axis cs:0.5,0.436046511627907);
\draw[draw=black,fill=color0,very thin] (axis cs:0.5,0) rectangle (axis cs:0.6,0.564189189189189);
\draw[draw=black,fill=color0,very thin] (axis cs:0.6,0) rectangle (axis cs:0.7,0.633986928104575);
\draw[draw=black,fill=color0,very thin] (axis cs:0.7,0) rectangle (axis cs:0.8,0.725490196078431);
\draw[draw=black,fill=color0,very thin] (axis cs:0.8,0) rectangle (axis cs:0.9,0.809433962264151);
\draw[draw=black,fill=color0,very thin] (axis cs:0.9,0) rectangle (axis cs:1,0.986065673088574);
\draw[draw=black,fill=red,opacity=0.6,very thin] (axis cs:-6.93889390390723e-18,0.05) rectangle (axis cs:0.1,0.05);
\draw[draw=black,fill=red,opacity=0.6,very thin] (axis cs:0.1,0.15) rectangle (axis cs:0.2,0.15);
\draw[draw=black,fill=red,opacity=0.6,very thin] (axis cs:0.2,0.375) rectangle (axis cs:0.3,0.269405594095588);
\draw[draw=black,fill=red,opacity=0.6,very thin] (axis cs:0.3,0.371428571428571) rectangle (axis cs:0.4,0.359135525141444);
\draw[draw=black,fill=red,opacity=0.6,very thin] (axis cs:0.4,0.436046511627907) rectangle (axis cs:0.5,0.45345939228008);
\draw[draw=black,fill=red,opacity=0.6,very thin] (axis cs:0.5,0.564189189189189) rectangle (axis cs:0.6,0.54707975242589);
\draw[draw=black,fill=red,opacity=0.6,very thin] (axis cs:0.6,0.633986928104575) rectangle (axis cs:0.7,0.649735633454292);
\draw[draw=black,fill=red,opacity=0.6,very thin] (axis cs:0.7,0.725490196078431) rectangle (axis cs:0.8,0.750686002879584);
\draw[draw=black,fill=red,opacity=0.6,very thin] (axis cs:0.8,0.809433962264151) rectangle (axis cs:0.9,0.856447905751894);
\draw[draw=black,fill=red,opacity=0.6,very thin] (axis cs:0.9,0.986065673088574) rectangle (axis cs:1,0.991281442002038);
\addplot [semithick, red, dash pattern=on 5.55pt off 2.4pt]
table {%
0 0
1 1
};
\addlegendentry{Perfect Calibration}
\addlegendimage{area legend,draw=black, fill=color0, mark size=3}
\addlegendentry{Output}
\addlegendimage{area legend, draw=black, fill=red, opacity=0.6}
\addlegendentry{Gap}
\end{groupplot}

\draw ({$(current bounding box.south west)!0.95!(current bounding box.south east)$}|-{$(current bounding box.south west)!0.18!(current bounding box.north west)$}) node[
  scale=1.2,
  fill=white,
  draw=none,
  line width=0.4pt,
  inner sep=2.9pt,
  fill opacity=0.9,
  anchor=south east,
  text=black,
  rotate=0.0
]{ECE=0.92};
\end{tikzpicture}
  }
  \subcaption{$\mathbf{\text{AU}_{P_{\mathcal{X}_\tau}}}$}
\end{minipage}
\vspace{-2mm}
\caption{Reliability diagrams of PreResNet56 on CIFAR-10. Top-label ECE (\%) appears below each panel.}
\label{fig:rc_ece}
\end{figure*}

\begin{figure*}[htbp] 
\centering 
\includegraphics[width=1.0\linewidth]{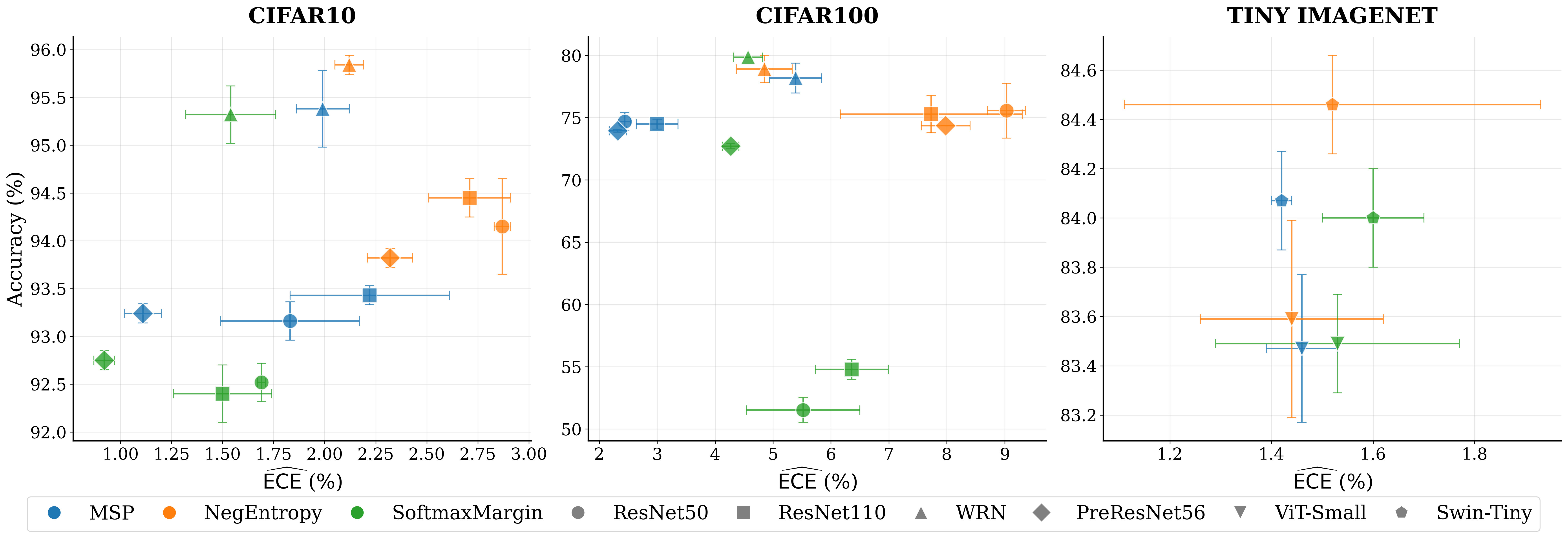}
\caption{Test $\widehat{\text{ECE}}$ of $\widehat{\mathrm{AU}}_{\mathbb{P}_{\mathcal{X}_\tau}}$ loss with different CSFs, reported as mean $\times 10^{2}$.}
 \label{fig:influenceCSF}
\end{figure*}

\begin{table*}[htbp]
\caption{Test accuracy and $\widehat{\text{ECE}}$ ($\downarrow$), reported as mean $\times 10^{2}$.}
% \vspace{-2mm}
\centering
% \scriptsize
\setlength{\tabcolsep}{1.2pt} % Increase column spacing
\renewcommand{\arraystretch}{1.15} % Increase row spacing
\resizebox{1.0\textwidth}{!}{
\begin{tabular}{llccccccc|ccccccc}
\toprule
\fontsize{9}{11}\selectfont  & \textbf{Model} & \textbf{XE} & \textbf{FL-53} & \textbf{Inv.FL} & \textbf{Dual} & \textbf{KDE-XE} & \textbf{AURC} & $\mathbf{\text{AU}_{\mathbb{P}_{\mathcal{X}_\tau}}}$ &
\textbf{XE} & \textbf{FL-53} & \textbf{Inv.FL} & \textbf{Dual} & \textbf{KDE-XE} & \textbf{AURC} & $\mathbf{\text{AU}_{\mathbb{P}_{\mathcal{X}_\tau}}}$ \\
\midrule
\multicolumn{14}{l}{\textbf{CIFAR-10} \quad \quad  \hfill  \textbf{Accuracy} ($\uparrow$) \hfill \hfill \textbf{ECE} ($\downarrow$)} \\
\rowcolor{gray!10} 
& ResNet50               & \textbf{94.51} & 93.58 & 94.36 & 94.09 & 94.46 & 93.32 & 92.52 & 3.00 & 3.98 & 4.05 & \textbf{1.27} & 3.09 & 2.53 & 1.69 \\
& ResNet110              & 94.47 & 93.80 & 93.92 & 93.72 & \textbf{94.68} & 93.05 & 92.40 & 3.04 & 4.44 & 4.80 & 2.08 & 2.80 & 2.44 & \textbf{1.50} \\
\rowcolor{gray!10} 
& WRN                    & 95.88 & 95.49 & \textbf{95.98} & 95.74 & 95.84 & 94.93 & 95.32 & 1.99 & 5.74 & 2.44 & 3.96 & 2.05 & 1.88 & \textbf{1.54} \\
& PreResNet56            & 93.97 & 92.97 & \textbf{94.14} & 93.72 & 94.09 & 92.63 & 92.75 & 2.62 & 5.50 & 3.89 & 2.07 & 2.57 & 1.75 & \textbf{0.92} \\
\midrule
\multicolumn{14}{l}{\textbf{CIFAR-100}\quad \quad \quad \hfill  \textbf{Accuracy} ($\uparrow$) \hfill\hfill \textbf{ECE} ($\downarrow$)} \\
\rowcolor{gray!10} 
& ResNet50               & 74.37 & \textbf{76.41} & 75.64 & 75.46 & 76.25 & 75.40 & 74.69 & 7.18 & 4.51 & 10.70 & 4.20 & 9.59 & 8.43 & \textbf{2.44} \\
& ResNet110              & 75.27 & 77.48 & 77.36 & 76.30 & \textbf{78.27} & 74.33 & 74.48 & 7.29 & 4.57 & 10.90 & 4.60 & 9.09 & 5.23 & \textbf{3.00} \\
\rowcolor{gray!10} 
& WRN                    & 78.68 & 79.07 & 79.46 & 78.38 & 79.63 & 76.46 & \textbf{79.85} & 5.40 & 6.22 & 6.20 & \textbf{3.54} & 5.77 & 4.75 & 4.47 \\
& PreResNet56            & 74.28 & 72.93 & 74.18 & 72.96 & \textbf{74.69} & 71.91 & 73.94 & 7.10 & 2.44 & 9.81 & \textbf{0.96} & 7.42 & 4.88 & 2.32 \\
\midrule
\multicolumn{14}{l}{\textbf{Tiny-ImageNet} \hfill \textbf{Accuracy} ($\uparrow$) \hfill \hfill \textbf{ECE} ($\downarrow$)} \\
\rowcolor{gray!10} 
& Vit-Small              & 82.94 & 81.39 & 82.87 & 82.08 & 83.26 & 83.10 & \textbf{83.59} & 4.65 & 4.14 & 6.28 & 3.19 & 3.02 &  1.44 & \textbf{1.44} \\
& Swin-Tiny              & 85.36 & 84.31 & \textbf{85.64} & 85.42 & 85.36 & 82.35 & 84.07 & 1.62 & 7.69 & 3.00 & 1.60 & 1.71 & 2.02 & \textbf{1.42} \\
\bottomrule
\end{tabular}}
\label{tab:combined_acc_ece}
\end{table*}
\textbf{Datasets and Models.} We conduct experiments on image classification datasets including CIFAR10/100~\citep{krizhevsky2009learning}, and Tiny-ImageNet~\citep{deng2009imagenet}. On CIFAR-10/100, we evaluate models including WideResNet-28x10~\citep{zagoruyko2016wide}, PreResNet-56~\citep{he2016identity}, and ResNet architectures with depths 50 and 110~\citep{he2016deep}. For Tiny-ImageNet, we use two pretrained transformer-based models from the \textit{timm} library~\citep{rw2019timm}: ViT-Small~\citep{dosovitskiy2020image} and Swin-Tiny~\citep{liu2021swin}. Each experiment is repeated with three different seeds, and we report the mean performance across these runs. 

\noindent \textbf{Experimental setup.} For experiments on CIFAR-10/100, we follow the training setup described in~\citet{xia2022moderate}. For transformer-based models on Tiny-ImageNet, we adopt the training strategy from~\citet{yoshioka2024visiontransformers}. See Appendix~\ref{appendix:trainingdetails} for details. 

\noindent \textbf{Evaluation Metrics.} We evaluate calibration performance on the test set using binned estimators of ECE and cwECE, computed with two binning strategies: equal-width binning (denoted as $\widehat{\text{ECE}}$ and $\widehat{\text{cwECE}}$) and adaptive binning~\citep{ding2020revisiting} (denoted as $\widehat{\text{ECE}}^a$ and $\widehat{\text{cwECE}}^a$). 

\noindent \textbf{Baselines.} We explore the trainable calibration methods including cross-entropy(XE),  KDE-XE, focal, inverse focal, AURC, and dual focal loss. For focal loss, we consider FL-53 where $\gamma=5$ for $f_{y^\prime}(\boldsymbol{x}) \in [0, 0.2)$ and $\gamma=3$ otherwise.  For inverse focal loss, we note that when $\gamma>3.5$, it may not able to update the model properly due to the positive gradients, as discussed in Appendix~\ref{gradient:invfocal}. For dual focal loss, the implementation\footnote{\url{https://github.com/Linwei94/ICML2023-DualFocalLoss}} acts in the opposite direction of the intended principle but follows the concept of selective classification, as discussed in Appendix~\ref{gradient:dualfocal}. We additionally evaluate post-hoc calibration on models trained with the XE loss, considering both accuracy-preserving methods i.e. TS and methods that may alter accuracy, including MetaAcc, MetaMis\footnote{\url{https://github.com/maxc01/metacal}}~\citep{Ma2021a} in Appendix~\ref{appdix:Addresult}.

\noindent \textbf{Model selection.} For inverse focal and dual focal losses, the $\gamma$ parameter is fine-tuned on a split validation set following~\cite{tao2023dual}. For the $\mathbf{\text{AU}_{\mathbb{P}_{\mathcal{X}_\tau}}}$ loss in Eq.~\eqref{eq:auloss}, the smooth parameter $\mu$ and choice of CSF are selected using the same strategy. See Appendix~\ref{appendix:modelselection} for details.

\noindent \textbf{Influence of the CSFs.} We evaluate the $\mathbf{\text{AU}_{\mathbb{P}_{\mathcal{X}_\tau}}}$ loss across different CSFs, including MSP, Negative Entropy, and Softmax Margin. As illustrated in Figure~\ref{fig:influenceCSF}, the empirical calibration performance of $\mathbf{\text{AU}_{\mathbb{P}_{\mathcal{X}_\tau}}}$ exhibits noticeable sensitivity to the choice of CSF in most scenarios. For instance, MSP achieves the best $\widehat{\text{ECE}}$ on CIFAR-100 except for the WRN model. On Tiny-ImageNet, Negative Entropy gives the best performance for the ViT-Small, whereas MSP yields superior results for Swin-Tiny model.

\noindent \textbf{Results for trainable calibration methods.} Table~\ref{tab:combined_acc_ece} presents the test performance in terms of $\widehat{\text{ECE}}$ across various loss functions. Overall, the proposed $\mathbf{\text{AU}_{\mathbb{P}_{\mathcal{X}_\tau}}}$ loss demonstrates good calibration performance compared to baseline methods, while focal loss and inverse focal loss are more variable, depending on both the model architecture and dataset. As visualized in Fig.~\ref{fig:rc_ece}, focal loss tends to produce overconfident predictions on samples, whereas inverse focal loss and AURC exhibit underconfidence. In contrast, $\mathbf{\text{AU}_{\mathbb{P}_{\mathcal{X}_\tau}}}$ yields more balanced and better-calibrated confidence scores. We also observe that Dual Focal loss exhibits strong performance across all datasets and models; however, this may be attributed to its implementation, which operates in the opposite direction of the stated objective and resembles the principal of selective classification, as discussed in Appendix~\ref{appendix:gradient}. Moreover, results in Appendix Table~\ref{tab:cwece_ts_split} shows that the top-label calibration performance can be further improved via TS, highlighting the benefit of post-hoc methods. For non–accuracy-preserving post-hoc methods i.e. MetaAcc, MetaMis, they yield additional reductions in both $\widehat{\text{cwECE}}$ and $\widehat{\text{ECE}}^a$ (see Appendix Table~\ref{tab:ece_a_ts_split}) but often at the expense of predictive accuracy. 

Table~\ref{tab:cwece_ts_split} provides an evaluation of $\widehat{\text{cwECE}}$ and accuracy across different models and datasets. The $\text{AU}_{\mathbb{P}_{\mathcal{X}_\tau}}$ loss consistently demonstrates good calibration performance, achieving the best $\widehat{\text{cwECE}}$ in the majority of cases. Furthermore, its empirical classwise calibration performance can be further enhanced with TS on the CIFAR-10/100 datasets. In contrast, FL-53 performs poorly relative to other trainable methods on both CIFAR-10 and Tiny-ImageNet, indicating its limited effectiveness in class-wise calibration settings.

\section{Conclusion and Limitation}

In this work, we found a theoretical connection between selective classification and calibration error minimization, offering insights into why certain reweighting schemes effectively improve calibration. Building on this foundation, we propose a differentiable loss function derived from selective classification principles, which targets the selective risk in low-confidence regions where confidence ranking is meaningful. Our approach utilizes a bin-based CDF approximation, ensuring computational efficiency while maintaining differentiability for any choice of CSFs.

Interestingly, we find that the dual focal loss implementation implicitly follows similar principles by down-weighting hard examples, although this connection was not explicitly made in their work. In contrast, our method is directly grounded in principled theoretical foundations, contributing to its superior empirical calibration performance across diverse datasets and architectures. A limitation of our approach is its reliance on the quality of the chosen CSF. Our empirical evaluations demonstrate that CSF performance varies significantly across different datasets and model architectures. We therefore anticipate that incorporating more reliable confidence estimation will further enhance calibration performance.

\section*{Acknowledgement}
This research received funding from the Flemish Government (AI Research Program) and the Research Foundation - Flanders (FWO) through project number G0G2921N. HZ is supported by the China Scholarship Council. We acknowledge EuroHPC JU for awarding the project ID EHPC-BEN-2025B22-061 and EHPC-BEN-2025B22-037 access to the EuroHPC supercomputer LEONARDO, hosted by CINECA (Italy) and the LEONARDO consortium.

\bibliography{ref}

@article{mukhoti2020calibrating,
  title={Calibrating deep neural networks using focal loss},
  author={Mukhoti, Jishnu and Kulharia, Viveka and Sanyal, Amartya and Golodetz, Stuart and Torr, Philip and Dokania, Puneet},
  journal={Advances in neural information processing systems},
  volume={33},
  pages={15288--15299},
  year={2020}
}

@inproceedings{naeini2015obtaining,
  title={Obtaining well calibrated probabilities using bayesian binning},
  author={Naeini, Mahdi Pakdaman and Cooper, Gregory and Hauskrecht, Milos},
  booktitle={Proceedings of the AAAI conference on artificial intelligence},
  volume={29},
  year={2015}
}

@inproceedings{dosovitskiy2020image,
  title={An Image is Worth 16x16 Words: Transformers for Image Recognition at Scale},
  author={Dosovitskiy, Alexey and Beyer, Lucas and Kolesnikov, Alexander and Weissenborn, Dirk and Zhai, Xiaohua and Unterthiner, Thomas and Dehghani, Mostafa and Minderer, Matthias and Heigold, G and Gelly, S and others},
  booktitle={International Conference on Learning Representations},
  year={2020}
}

@inproceedings{xia2022moderate,
  title={Moderate coreset: A universal method of data selection for real-world data-efficient deep learning},
  author={Xia, Xiaobo and Liu, Jiale and Yu, Jun and Shen, Xu and Han, Bo and Liu, Tongliang},
  booktitle={The Eleventh International Conference on Learning Representations},
  year={2022}
}

@misc{yoshioka2024visiontransformers,
  author       = {Kentaro Yoshioka},
  title        = {vision-transformers-cifar10: Training Vision Transformers (ViT) and related models on CIFAR-10},
  year         = {2024},
  publisher    = {GitHub},
  howpublished = {\url{https://github.com/kentaroy47/vision-transformers-cifar10}}
}

@inproceedings{liu2021swin,
  title={Swin transformer: Hierarchical vision transformer using shifted windows},
  author={Liu, Ze and Lin, Yutong and Cao, Yue and Hu, Han and Wei, Yixuan and Zhang, Zheng and Lin, Stephen and Guo, Baining},
  booktitle={Proceedings of the IEEE/CVF international conference on computer vision},
  pages={10012--10022},
  year={2021}
}

@inproceedings{he2016identity,
  title={Identity mappings in deep residual networks},
  author={He, Kaiming and Zhang, Xiangyu and Ren, Shaoqing and Sun, Jian},
  booktitle={Computer Vision--ECCV 2016: 14th European Conference, Amsterdam, The Netherlands, October 11--14, 2016, Proceedings, Part IV 14},
  pages={630--645},
  year={2016},
  organization={Springer}
}

@article{liu2024deepseek,
  title={DeepSeek-V3 Technical Report},
  author={Liu, Aixin and Feng, Bei and Xue, Bing and Wang, Bingxuan and Wu, Bochao and Lu, Chengda and Zhao, Chenggang and Deng, Chengqi and Zhang, Chenyu and Ruan, Chong and others},
  journal={arXiv e-prints},
  pages={arXiv--2412},
  year={2024}
}

@article{achiam2023gpt,
  title={Gpt-4 technical report},
  author={Achiam, Josh and Adler, Steven and Agarwal, Sandhini and Ahmad, Lama and Akkaya, Ilge and Aleman, Florencia Leoni and Almeida, Diogo and Altenschmidt, Janko and Altman, Sam and Anadkat, Shyamal and others},
  journal={arXiv preprint arXiv:2303.08774},
  year={2023}
}

@article{haggenmuller2021skin,
  title={Skin cancer classification via convolutional neural networks: systematic review of studies involving human experts},
  author={Haggenm{\"u}ller, Sarah and Maron, Roman C and Hekler, Achim and Utikal, Jochen S and Barata, Catarina and Barnhill, Raymond L and Beltraminelli, Helmut and Berking, Carola and Betz-Stablein, Brigid and Blum, Andreas and others},
  journal={European Journal of Cancer},
  volume={156},
  pages={202--216},
  year={2021},
  publisher={Elsevier}
}

@inproceedings{gupta2022top,
  title={Top-label calibration and multiclass-to-binary reductions},
  author={Gupta, Chirag and Ramdas, Aaditya},
  booktitle={International Conference on Learning Representations},
  year={2022},
}

@inproceedings{panchenko2022class,
  title={Class-wise and reduced calibration methods},
  author={Panchenko, Michael and Benmerzoug, Anes and de Benito Delgado, Miguel},
  booktitle={2022 21st IEEE International Conference on Machine Learning and Applications (ICMLA)},
  pages={1093--1100},
  year={2022},
  organization={IEEE}
}

@article{kumar2019verified,
  title={Verified uncertainty calibration},
  author={Kumar, Ananya and Liang, Percy S and Ma, Tengyu},
  journal={Advances in neural information processing systems},
  volume={32},
  year={2019}
}

@inproceedings{charoenphakdee2021focal,
  title={On focal loss for class-posterior probability estimation: A theoretical perspective},
  author={Charoenphakdee, Nontawat and Vongkulbhisal, Jayakorn and Chairatanakul, Nuttapong and Sugiyama, Masashi},
  booktitle={Proceedings of the IEEE/CVF Conference on Computer Vision and Pattern Recognition},
  pages={5202--5211},
  year={2021}
}

@inproceedings{Ma2021a,
  TITLE = {Meta-Cal: Well-controlled Post-hoc Calibration by Ranking},
  AUTHOR = {Ma, Xingchen and Blaschko, Matthew B.},
  BOOKTITLE = {International Conference on Machine Learning},
  YEAR = {2021},
}

@inproceedings{guo2017calibration,
  title={On calibration of modern neural networks},
  author={Guo, Chuan and Pleiss, Geoff and Sun, Yu and Weinberger, Kilian Q},
  booktitle={International conference on machine learning},
  pages={1321--1330},
  year={2017},
  organization={PMLR}
}

@misc{rw2019timm,
  author = {Ross Wightman},
  title = {PyTorch Image Models},
  year = {2019},
  publisher = {GitHub},
  journal = {GitHub repository},
  doi = {10.5281/zenodo.4414861},
  howpublished = {\url{https://github.com/rwightman/pytorch-image-models}}
}

@article{popordanoska2022consistent,
  title={A consistent and differentiable lp canonical calibration error estimator},
  author={Popordanoska, Teodora and Sayer, Raphael and Blaschko, Matthew},
  journal={Advances in Neural Information Processing Systems},
  volume={35},
  pages={7933--7946},
  year={2022}
}

@inproceedings{zagoruyko2016wide,
  title={Wide Residual Networks},
  author={Zagoruyko, Sergey and Komodakis, Nikos},
  booktitle={Procedings of the British Machine Vision Conference 2016},
  pages={87--1},
  year={2016},
  organization={British Machine Vision Association}
}

@inproceedings{karandikar2021soft,
  title={Soft calibration objectives for neural networks},
  author={Karandikar, Abhay and Cain, Nicholas and Tran, Dinh and Lakshminarayanan, Balaji and Shlens, Jonathon and Mozer, Michael C and Roelofs, Rebecca},
  booktitle={Advances in Neural Information Processing Systems},
  volume={34},
  pages={29768--29779},
  year={2021}
}

@inproceedings{kumar2018trainable,
  title={Trainable calibration measures for neural networks from kernel mean embeddings},
  author={Kumar, Aviral and Sarawagi, Sunita and Jain, Ujjwal},
  booktitle={Proceedings of the 35th International Conference on Machine Learning},
  pages={2805--2814},
  year={2018},
  organization={PMLR}
}

@inproceedings{krishnan2020improving,
  title={Improving model calibration with accuracy versus uncertainty optimization},
  author={Krishnan, Ramesh and Tickoo, Omesh},
  booktitle={Advances in Neural Information Processing Systems},
  volume={33},
  pages={18237--18248},
  year={2020}
}

@article{bohdal2023meta,
  title={Meta-Calibration: Learning of Model Calibration Using Differentiable Expected Calibration Error},
  author={Bohdal, Ondrej and Yang, Yongxin and Hospedales, Timothy M},
  journal={Transactions on Machine Learning Research},
  pages={1--21},
  year={2023}
}

@inproceedings{hui2020evaluation,
  title={Evaluation of neural architectures trained with square loss vs cross-entropy in classification tasks},
  author={Hui, L},
  booktitle={The Ninth International Conference on Learning Representations (ICLR 2021)},
  year={2020}
}

@inproceedings{tao2023dual,
  title={Dual focal loss for calibration},
  author={Tao, Linwei and Dong, Minjing and Xu, Chang},
  booktitle={International Conference on Machine Learning},
  pages={33833--33849},
  year={2023},
  organization={PMLR}
}

@inproceedings{ding2021local,
  title={Local temperature scaling for probability calibration},
  author={Ding, Zhipeng and Han, Xu and Liu, Peirong and Niethammer, Marc},
  booktitle={Proceedings of the IEEE/CVF International Conference on Computer Vision},
  pages={6889--6899},
  year={2021}
}

@article{ghosh2022adafocal,
  title={Adafocal: Calibration-aware adaptive focal loss},
  author={Ghosh, Arindam and Schaaf, Thomas and Gormley, Matthew},
  journal={Advances in Neural Information Processing Systems},
  volume={35},
  pages={1583--1595},
  year={2022}
}

@inproceedings{he2016deep,
  title={Deep residual learning for image recognition},
  author={He, Kaiming and Zhang, Xiangyu and Ren, Shaoqing and Sun, Jian},
  booktitle={Proceedings of the IEEE conference on computer vision and pattern recognition},
  pages={770--778},
  year={2016}
}

@inproceedings{deng2009imagenet,
  title={Imagenet: A large-scale hierarchical image database},
  author={Deng, Jia and Dong, Wei and Socher, Richard and Li, Li-Jia and Li, Kai and Fei-Fei, Li},
  booktitle={2009 IEEE conference on computer vision and pattern recognition},
  pages={248--255},
  year={2009},
  organization={Ieee}
}

@inproceedings{zhou2024novelcharacterizationpopulationarea,
      title={A Novel Characterization of the Population Area Under the 
Risk Coverage Curve (AURC) and Rates of Finite Sample Estimators}, 
      author={Han Zhou and Jordy Van Landeghem and Teodora Popordanoska 
and Matthew B. Blaschko},
      booktitle={International Conference on Machine Learning},
      year={2025},
      eprint={2410.15361},
      archivePrefix={arXiv},
      primaryClass={stat.ML},
}

@article{belghazi2021classifiers,
  title={What classifiers know what they don't?},
  author={Belghazi, Mohamed Ishmael and Lopez-Paz, David},
  journal={arXiv preprint arXiv:2107.06217},
  year={2021}
}

@inproceedings{ding2020revisiting,
  title={Revisiting the evaluation of uncertainty estimation and its application to explore model complexity-uncertainty trade-off},
  author={Ding, Yukun and Liu, Jinglan and Xiong, Jinjun and Shi, Yiyu},
  booktitle={Proceedings of the IEEE/CVF Conference on Computer Vision and Pattern Recognition Workshops},
  pages={4--5},
  year={2020}
}

@article{krizhevsky2009learning,
  title={Learning Multiple Layers of Features from Tiny Images},
  author={Krizhevsky, A},
  journal={Master's thesis, University of Tront},
  year={2009}
}

@inproceedings{geifman2018bias,
  title={Bias-reduced uncertainty estimation for deep neural classifiers},
  author={Geifman, Yonatan and Uziel, Guy and El-Yaniv, Ran},
  booktitle={International Conference on Learning Representations},
  year={2019}
}

@inproceedings{hendrycks2016baseline,
  title={A Baseline for Detecting Misclassified and Out-of-Distribution Examples in Neural Networks},
  author={Hendrycks, Dan and Gimpel, Kevin},
  booktitle={International Conference on Learning Representations},
  year={2022}
}

@article{liu2020energy,
  title={Energy-based out-of-distribution detection},
  author={Liu, Weitang and Wang, Xiaoyun and Owens, John and Li, Yixuan},
  journal={Advances in neural information processing systems},
  volume={33},
  pages={21464--21475},
  year={2020}
}

@inproceedings{esaki2024accuracy,
  title={Accuracy-Preserving Calibration via Statistical Modeling on Probability Simplex},
  author={Esaki, Yasushi and Nakamura, Akihiro and Kawano, Keisuke and Tokuhisa, Ryoko and Kutsuna, Takuro},
  booktitle={International Conference on Artificial Intelligence and Statistics},
  pages={1666--1674},
  year={2024},
  organization={PMLR}
}

@inproceedings{lin2017focal,
  title={Focal loss for dense object detection},
  author={Lin, Tsung-Yi and Goyal, Priya and Girshick, Ross and He, Kaiming and Doll{\'a}r, Piotr},
  booktitle={Proceedings of the IEEE international conference on computer vision},
  pages={2980--2988},
  year={2017}
}

@article{brier1950verification,
  title={Verification of forecasts expressed in terms of probability},
  author={Brier, Glenn W},
  journal={Monthly weather review},
  volume={78},
  number={1},
  pages={1--3},
  year={1950}
}

@article{geifman2017selective,
  title={Selective classification for deep neural networks},
  author={Geifman, Yonatan and El-Yaniv, Ran},
  journal={Advances in neural information processing systems},
  volume={30},
  year={2017}
}

@inproceedings{galilcan,
  title={What Can we Learn From The Selective Prediction And Uncertainty Estimation Performance Of 523 Imagenet Classifiers?},
  author={Galil, Ido and Dabbah, Mohammed and El-Yaniv, Ran},
  booktitle={The Eleventh International Conference on Learning Representations},
year={2023}
}

@article{liu2024optimizing,
  title={Optimizing calibration by gaining aware of prediction correctness},
  author={Liu, Yuchi and Wang, Lei and Zou, Yuli and Zou, James and Zheng, Liang},
  journal={arXiv preprint arXiv:2404.13016},
  year={2024}
}

@article{platt1999probabilistic,
  title={Probabilistic outputs for support vector machines and comparisons to regularized likelihood methods},
  author={Platt, John and others},
  journal={Advances in large margin classifiers},
  volume={10},
  number={3},
  pages={61--74},
  year={1999},
  publisher={Cambridge, MA}
}

@article{kull2019beyond,
  title={Beyond temperature scaling: Obtaining well-calibrated multi-class probabilities with dirichlet calibration},
  author={Kull, Meelis and Perello Nieto, Miquel and K{\"a}ngsepp, Markus and Silva Filho, Telmo and Song, Hao and Flach, Peter},
  journal={Advances in neural information processing systems},
  volume={32},
  year={2019}
}

@inproceedings{kull2017beta,
  title={Beta calibration: a well-founded and easily implemented improvement on logistic calibration for binary classifiers},
  author={Kull, Meelis and Silva Filho, Telmo and Flach, Peter},
  booktitle={Artificial intelligence and statistics},
  pages={623--631},
  year={2017},
  organization={PMLR}
}

@inproceedings{wenger2020non,
  title={Non-parametric calibration for classification},
  author={Wenger, Jonathan and Kjellstr{\"o}m, Hedvig and Triebel, Rudolph},
  booktitle={International Conference on Artificial Intelligence and Statistics},
  pages={178--190},
  year={2020},
  organization={PMLR}
}

@article{wang2021rethinking,
  title={Rethinking calibration of deep neural networks: Do not be afraid of overconfidence},
  author={Wang, Deng-Bao and Feng, Lei and Zhang, Min-Ling},
  journal={Advances in Neural Information Processing Systems},
  volume={34},
  pages={11809--11820},
  year={2021}
}

@incollection{komisarenko2024improving,
  title={Improving Calibration by Relating Focal Loss, Temperature Scaling, and Properness},
  author={Komisarenko, Viacheslav and Kull, Meelis},
  booktitle={ECAI 2024},
  pages={1535--1542},
  year={2024},
  publisher={IOS Press}
}
\bibliographystyle{icml2026}

\newpage
\appendix
\onecolumn

\section{Definitions} \label{apdix:def}
\setcounter{figure}{0}
\setcounter{table}{0}

\begin{definition}[Brier Score (BS)]
   The Brier Score of a model $f: \mathcal{X} \rightarrow \Delta^{k-1}$ is defined as:
   \begin{align}
      \text{BS}(f) = \mathbb{E} \left[ \| f(\boldsymbol{x}) - \boldsymbol{y} \|_2^2 \right] 
   \end{align}
   where $f(\boldsymbol{x})$ is the predicted probabilities and $\boldsymbol{y}$ is the one-hot encoded label.
\end{definition}

\noindent\textbf{CSFs.} The CSFs are generally defined as functions of the predicted probabilities $\boldsymbol{p}$, which are the outputs by passing the logits $\boldsymbol{z}$ produced by the model for the input $\boldsymbol{x}$ through the softmax function $\sigma(\cdot)$, expressed as $\boldsymbol{p}= \sigma(\boldsymbol{z}) \in \mathbb{R}^k$. The specific forms of these CSFs are outlined as follows:
\begin{table}[H] 
\caption{Commonly Used CSFs} 
\label{tb:csfs}
\centering
\begin{tabular}{|>{\raggedright\arraybackslash}m{3.5cm}|>{\raggedright\arraybackslash}m{3.5cm}|}
% \begin{tabular}{|c|}
\hline
\textbf{Method} & \textbf{Equation} \\ \hline
MSP & $g(\boldsymbol{p})= \max_{i=1}^k p_i$ \\ \hline
% MaxLogit & $g(\mathbf{z})=\max_{i=1}^K \mathbf{z}_i$ \\ \hline
Softmax Margin& \begin{tabular}[c]{@{}l@{}}$g(\boldsymbol{p})=p_i-\max_{j\neq i} p_j$  \\ with $i=\arg\max_{i=1}^k p_i$\end{tabular}  \\ \hline
Negative Entropy & $g(\boldsymbol{p}) = \sum_{i=1}^k \boldsymbol{z}_i \log \boldsymbol{z}
_i$ \\  \hline
\end{tabular}
\end{table}

\section{MISSING PROOFS}\label{apdix:proof}

\subsection{Proof of Prop.~\ref{prop:beta}} \label{proof:beta}
\begin{proof}
We begin with the expression for \( \beta(\boldsymbol{x}) \) from Eq.~\eqref{eq:defalpha}:
\begin{align}
\beta(\boldsymbol{x})=\mathbb{E}_{\tilde{\boldsymbol{x}}\sim \mathbb{P}_{\mathcal{X}_\tau}} \left(\frac{\mathbb{I}\left[g(\boldsymbol{x}) \ge g(\tilde{\boldsymbol{x}})\right] }{ \mathbb{E}_{\boldsymbol{x}^\prime \sim \mathbb{P}(\boldsymbol{x} )} \mathbb{I}\left[g(\boldsymbol{x}^\prime) \ge g(\tilde{\boldsymbol{x}})\right]} \right)  = \mathbb{E}_{\tilde{\boldsymbol{x}} \sim \mathbb{P}_{\mathcal{X}_\tau}} \left[ \frac{\mathbb{I}[g(\boldsymbol{x}) \geq g(\tilde{\boldsymbol{x}})]}{1 - G(g(\tilde{\boldsymbol{x}}))} \right],
\end{align}
where \( \mathbb{I}[g(\boldsymbol{x}) \geq g(\tilde{\boldsymbol{x}})] \) is the indicator function, which is 1 when \( g(\boldsymbol{x}) \geq g(\tilde{\boldsymbol{x}}) \), and 0 otherwise. The expectation is taken over the conditional distribution \( \mathbb{P}_{\mathcal{X}_\tau} \), where \( g(\tilde{\boldsymbol{x}}) \leq \tau \).

Next, we evaluate this expression by considering two different conditions for \( \boldsymbol{x} \):

\begin{itemize}
    \item If \(\boldsymbol{x} \in \mathcal{X} \setminus \mathcal{X}_\tau\):

    In this case, the indicator function \( \mathbb{I}[g(\boldsymbol{x}) \geq g(\tilde{\boldsymbol{x}})] \) is always 1 and we integrate over the region \( \mathcal{X}_\tau \), where \( g(\tilde{\boldsymbol{x}}) \leq \tau \). Thus, the expression for \( \beta(\boldsymbol{x}) \) becomes:
    \begin{align}
    \beta(\boldsymbol{x})  = \int_{\mathcal{X}_\tau} \left( \frac{\mathbb{I}[\tau \ge g(\tilde{\boldsymbol{x}})]}{1 - G(g(\tilde{\boldsymbol{x}}))} \right) \frac{d\mathbb{P}(\tilde{\boldsymbol{x}})}{\Pr(\mathcal{X}_\tau)}= \frac{1}{G(\tau)}
    \int_{g(\tilde{\boldsymbol{x}}) \leq \tau} \frac{1}{1 - G(g(\tilde{\boldsymbol{x}}))} d\mathbb{P}(\tilde{\boldsymbol{x}}). 
    \end{align}
    where $\Pr(\mathcal{X}_\tau) = G(\tau)$. Since \( G(g(\tilde{\boldsymbol{x}})) \) is the CDF of \( g(\tilde{\boldsymbol{x}}) \), which is monotonically increasing in \( g(\tilde{\boldsymbol{x}}) \), we can change the variable of integration to \( G(g(\tilde{\boldsymbol{x}})) \). The CDF \( G(\tilde{\boldsymbol{x}}) \) takes values between 0 and \( G(\tau) \) as \( g(\tilde{\boldsymbol{x}}) \) ranges from 0 to \( \tau \). Thus, we rewrite the integral as:
    \begin{align}
    \beta(\boldsymbol{x}) = \frac{1}{G(\tau)} \int_0^{G(\tau)} \frac{1}{1 - t} dt,
  \end{align}
    where \( t = G(g(\tilde{\boldsymbol{x}})) \). The integral of \( \frac{1}{1 - t} \) is straightforward to compute:
     \begin{align}
    \int_0^{G(\tau)} \frac{1}{1 - t} dt = -\ln \left(1 - G\left(\tau\right)\right).
   \end{align}
    Therefore, when \( g(\boldsymbol{x}) \geq \tau \), we obtain:
    \begin{align}
    \beta(\boldsymbol{x}) = -\frac{1}{G(\tau)} \ln(1 -G(\tau).
   \end{align}

    \item If \( \boldsymbol{x}  \in  \mathcal{X}_\tau  \):

    In this case, the indicator function \( \mathbb{I}[g(\boldsymbol{x}) \geq g(\tilde{\boldsymbol{x}})] \) is nonzero only when \( g(\boldsymbol{x}) \geq g(\tilde{\boldsymbol{x}}) \). Therefore the expression for \( \beta(\boldsymbol{x}) \) becomes:
  \begin{align}
    \beta(x) =\frac{1}{G(\tau)} \int_{g(\tilde{\boldsymbol{x}}) \leq g(\boldsymbol{x})} \frac{1}{1 - G(g(\tilde{\boldsymbol{x}}))} d\mathbb{P}(\tilde{\boldsymbol{x}})).
 \end{align}
    Similar to the first case, this results in:
  \begin{align}
    \beta(\boldsymbol{x}) = -\frac{1}{G(\tau)} \ln \bigl(1 - G(\boldsymbol{x}) \bigr).
 \end{align}
\end{itemize}
Combining the results from both conditions, we conclude that \( \beta(\boldsymbol{x}) \) is given by:
\begin{align}
\beta(\boldsymbol{x}) = 
\begin{cases}
    -\frac{1}{G(\tau)} \ln(1 - G(\tau)) & \text{if } \boldsymbol{x}  \in \mathcal{X} \setminus \mathcal{X}_\tau , \\
    -\frac{1}{G(\tau)} \ln(1 - G(g(\boldsymbol{x}))) & \text{if } \boldsymbol{x}   \in  \mathcal{X}_\tau .
\end{cases}
 \end{align}
Since $G(\tau)$ is a constant determined by the distribution and the threshold, it acts as a global scalar for the total loss. In the context of gradient-based optimization, this factor can be absorbed into the learning rate without affecting the relative sample importance or the optimization trajectory. Thus, we utilize the simplified form in Eq.~\eqref{eq:equivalentbeta}.
\end{proof}

\section{Gradient analysis} \label{appendix:gradient}

\subsection{Focal loss}~\label{gradient:focal}
 We consider the gradient of focal loss with respect to the true class predicted probability \( p_{y^{\prime}} \). By applying the product rule, we obtain:
\begin{align}
 \nabla_{p_{y^{\prime}}} \left[ - (1 - p_{y^{\prime}})^\gamma \log(p_{y^{\prime}}) \right] 
  & =   \nabla_{p_{y^{\prime}}} \left[ - (1 - p_{y^{\prime}})^\gamma \right]  \log(p_{y^{\prime}}) + (1 - p_{y^{\prime}})^\gamma  \nabla_{p_{y^{\prime}}} \log(p_{y^{\prime}})  \nonumber\\
  & = \gamma (1 - p_{y^{\prime}})^{\gamma - 1} \log(p_{y^{\prime}}) - (1 - p_{y^{\prime}})^\gamma  \frac{1}{p_{y^{\prime}}}. \nonumber
\end{align}
We can see that its gradient does shrink to $0$ when $p_{y^{\prime}} \rightarrow 1$ (see Fig.~\ref{fig:focalgrad}).

\begin{figure}[htbp]
% \footnotesize
\centering
\begin{minipage}[t]{0.49\linewidth}
    \centering
    \includegraphics[width=0.7\linewidth]{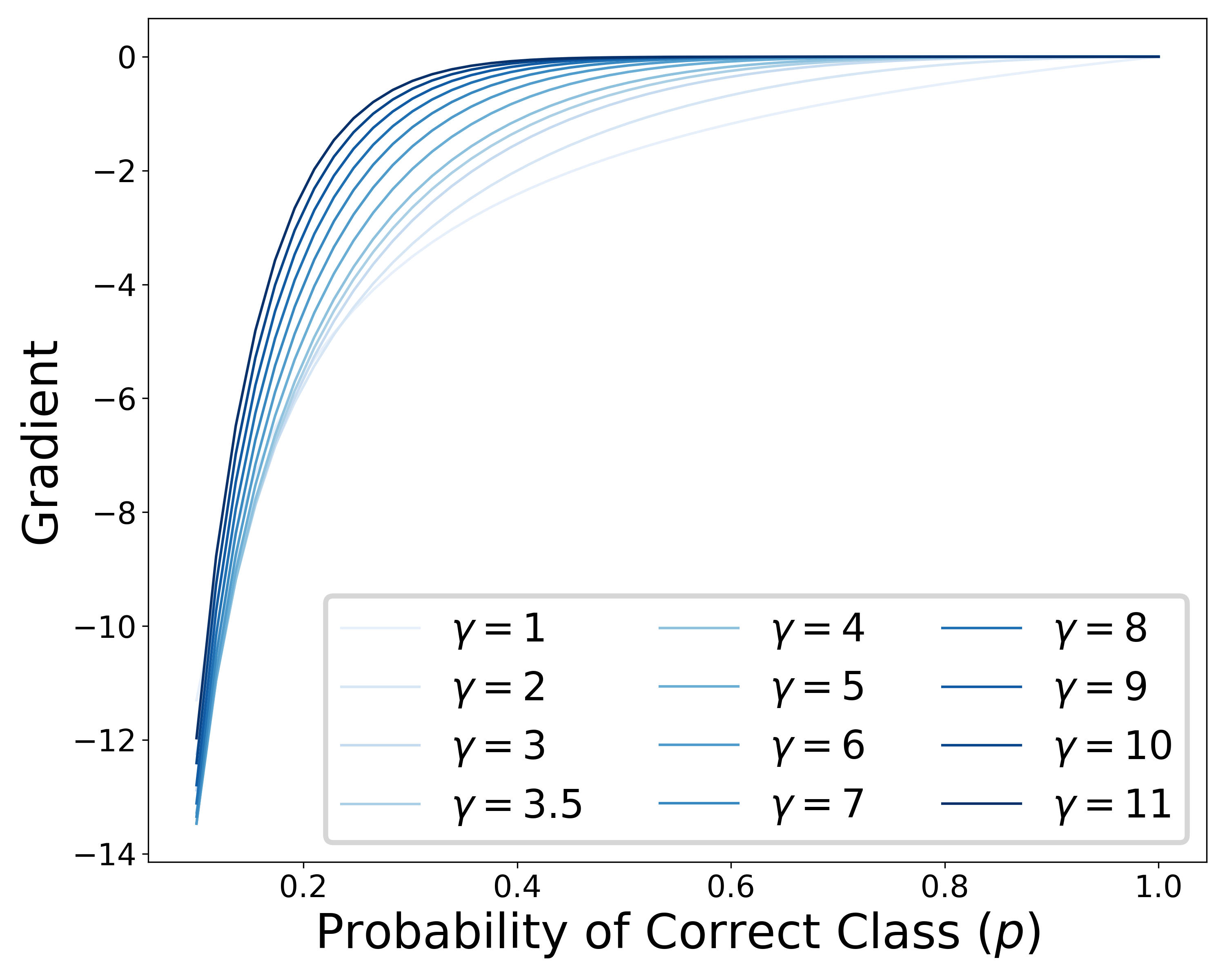}
    \subcaption{Focal loss}
    \label{fig:focalgrad}
\end{minipage}%
\begin{minipage}[t]{0.49\linewidth}
    \centering
    \includegraphics[width=0.7\linewidth]{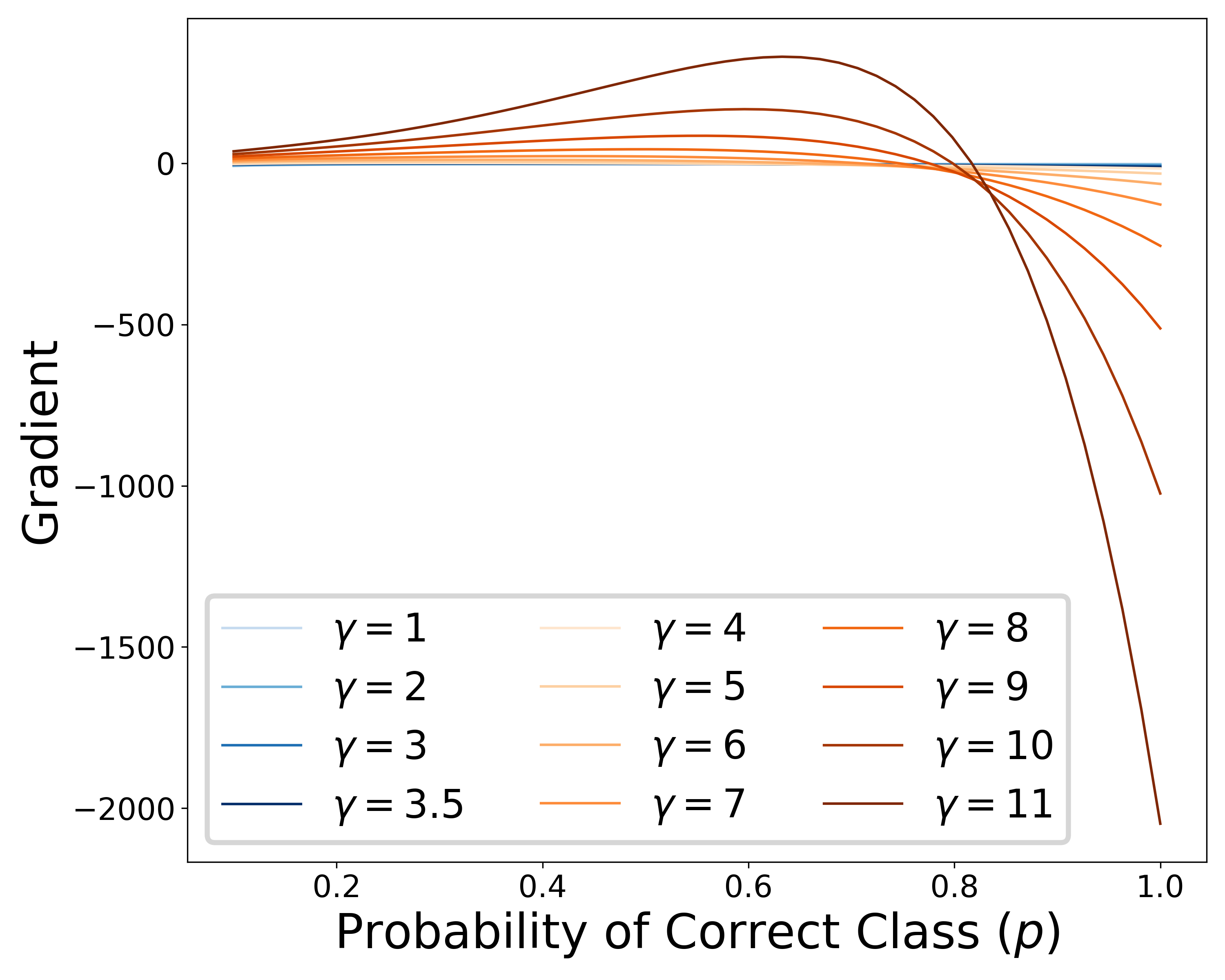}
    \subcaption{Inverse focal}
    \label{fig:invfocalgrad}
\end{minipage}
\vspace{-2mm}
\caption{The gradient of loss \(\ell\) with respect to \(p_{y^{\prime}}\).}
\label{fig:gradientcomparison}
\end{figure}

\subsection{Inverse Focal loss}~\label{gradient:invfocal}
For inverse focal loss, similarly we can compute its gradient with respect to \( p_{y^{\prime}} \):
\begin{align} 
 \nabla_{p_{y^{\prime}}} \left[ - (1 + p_{y^{\prime}})^\gamma \log(p_{y^{\prime}}) \right] 
& = \nabla_{p_{y^{\prime}}} \left[ - (1 + p_{y^{\prime}})^\gamma \right]  \log(p_{y^{\prime}}) + (1 + p_{y^{\prime}})^\gamma \nabla_{p_{y^{\prime}}} \log(p_{y^{\prime}}) \ \nonumber \\ & = \gamma (1 + p_{y^{\prime}})^{\gamma - 1} \log(p_{y^{\prime}}) - (1 + p_{y^{\prime}})^\gamma \frac{1}{p_{y^{\prime}}} \nonumber 
\end{align}
where we can see that its gradient does not shrink to $0$ when $p_{y^{\prime}} \rightarrow 1$ (see Fig.~\ref{fig:invfocalgrad}). However, we notice that when $\gamma$ is greater than $4$, the gradient can be positive in some regions. In these cases, the inverse focal loss may not be suitable to optimize the model.

\subsection{Dual Focal loss}~\label{gradient:dualfocal}
Dual Focal loss (DFL)~\cite{tao2023dual} generalizes Focal loss by incorporating both $p_{y'}$ (the probability of the ground-truth class) and the probability of the most competitive alternative class, $p_j$. The intended formulation is
\begin{equation}
\label{eq:dfl}
\ell_{\text{DFL}}(\boldsymbol{p}, \boldsymbol{y}) 
\;=\; - \bigl(1 - p_{y'} + p_j \bigr)^{\gamma} \, \log p_{y'}.
\end{equation}
Here, the weighting factor depends on the margin $p_{y'} - p_j$, which quantifies the model’s uncertainty: samples with smaller margins (i.e., harder examples) are assigned larger weights and consequently incur higher penalties.  
\noindent In practice, however, the publicly released implementation diverges from Eq.~\eqref{eq:dfl}. Specifically, the loss is computed as
\begin{align}
\ell_{\text{impl}} \big|_{p_{y'} \ge p_j}
&= - \bigl(1 - p_{y'} + p_j \bigr)^\gamma \, \log p_{y'}, \\[4pt]
\ell_{\text{impl}} \big|_{p_{y'} < p_j}
&= - \bigl(1 - p_{y'} \bigr)^\gamma \, \log p_{y'},
\end{align}
which introduces a discontinuity at the tie point $p_{y'} = p_j$. As $p_{y'}$ falls below $p_j$, the competing probability is abruptly discarded and the formulation degenerates into the standard focal weighting, as shown in Fig.~\ref{fig:randomdataDFL}. This discontinuity results in the under-penalization of misclassified samples. More importantly, the realized variant acts in the opposite direction of the intended principle: by nullifying the strongest competitor precisely when $p_{y'} < p_j$, it reduces the penalty for samples with large uncertainty. This contradiction resembles the effect observed in selective classification, where hard samples are down-weighted rather than emphasized.
\begin{figure}[htbp]
    \centering
    \begin{subfigure}{0.49\linewidth}
        \centering
        \includegraphics[width=2.1in]{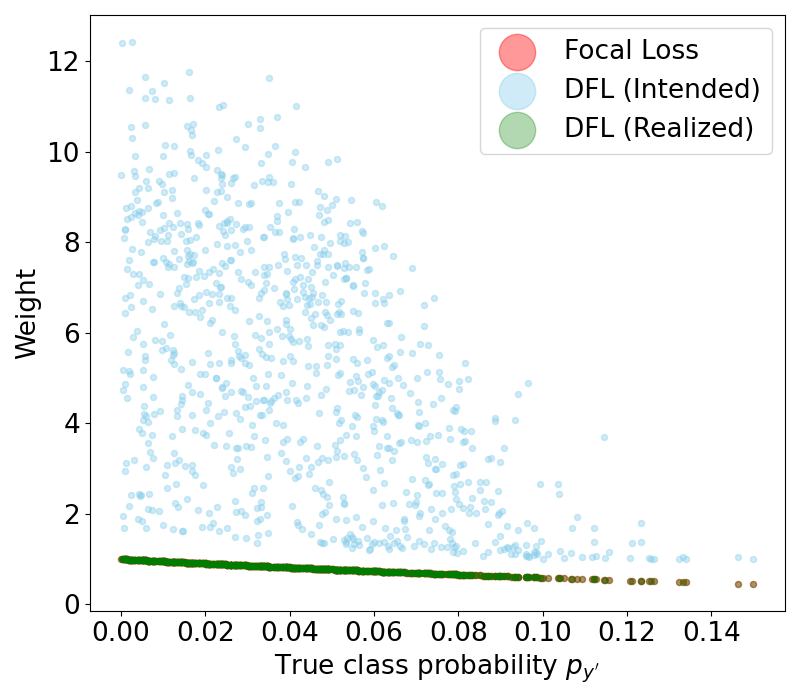}
        \caption{Weight}
        \label{fig:misweight}
    \end{subfigure}
    \hfill
    \begin{subfigure}{0.49\linewidth}
        \centering
        \includegraphics[width=2.1in]{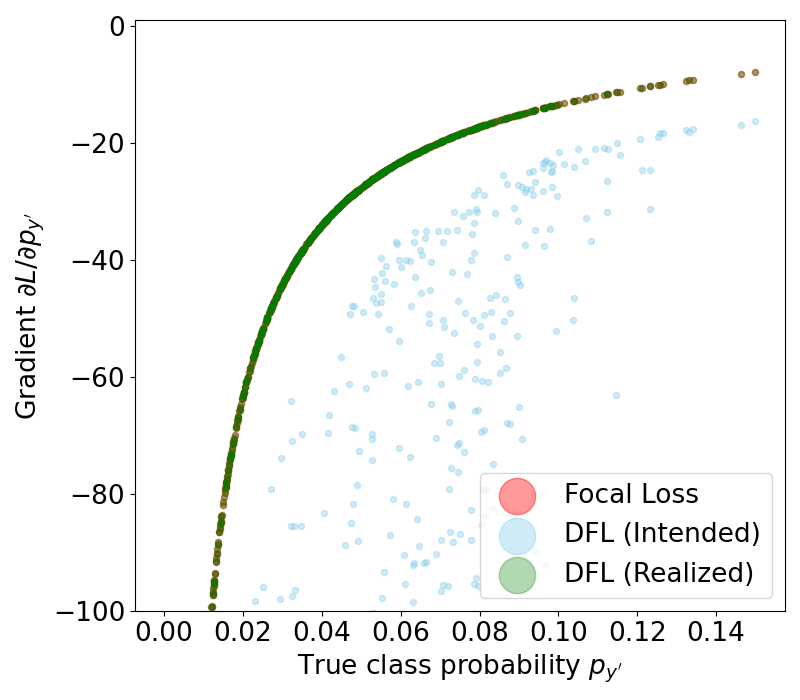}
        \caption{Gradient}
        \label{fig:misgradient}
    \end{subfigure}
 \caption{Illustration of DFL behavior on \textbf{misclassified} samples where $p_{y'} < p_j$.}
    \label{fig:randomdataDFL}
\end{figure}
\begin{figure*}[htbp]
\centering
\begin{minipage}{0.32\linewidth}
    \centering
    \includegraphics[width=1.0\linewidth]{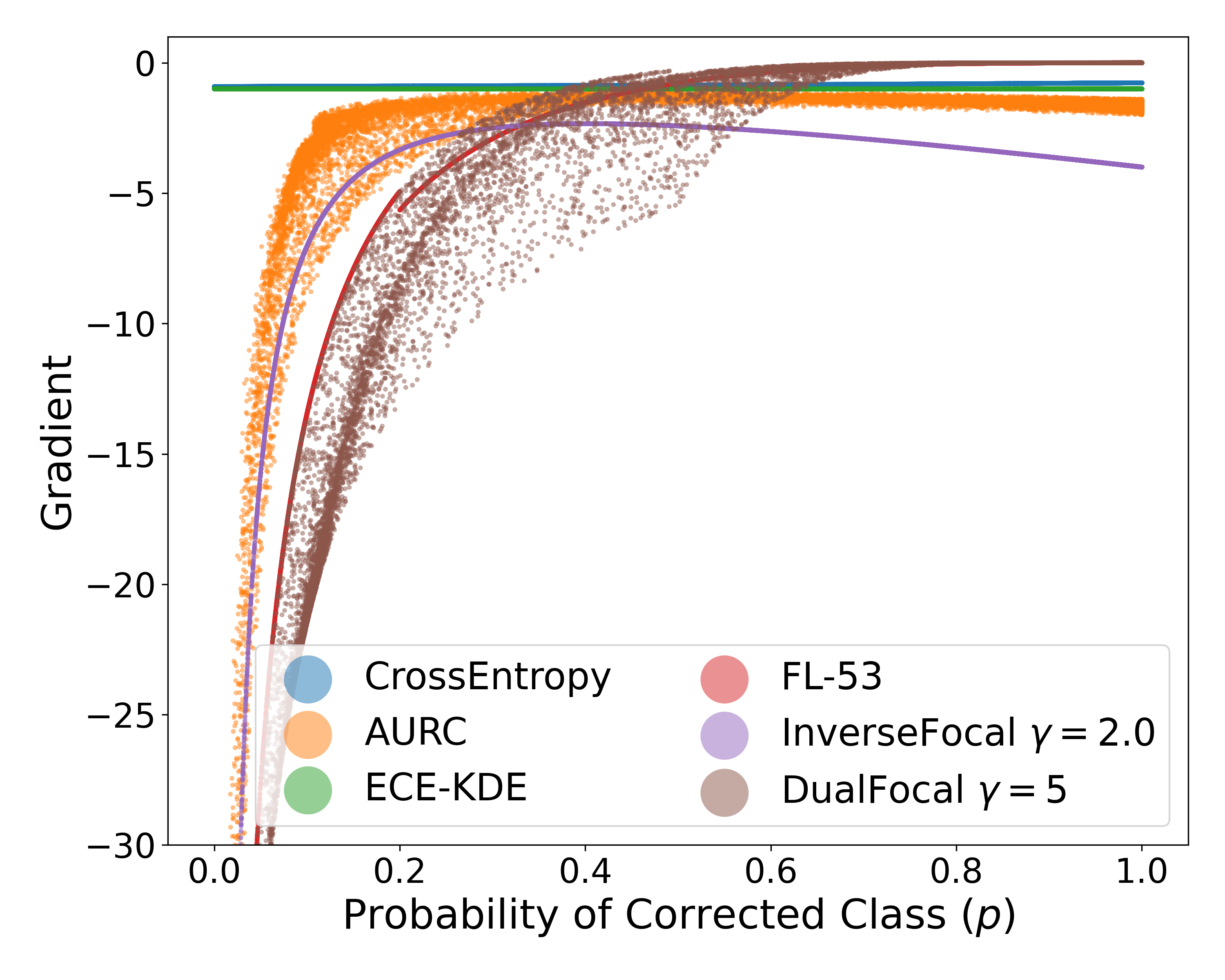}
    \subcaption{Epoch 40}
    \label{fig:gradientepoch40}
\end{minipage}
\begin{minipage}{0.32\linewidth}
    \centering
    \includegraphics[width=1.0\linewidth]{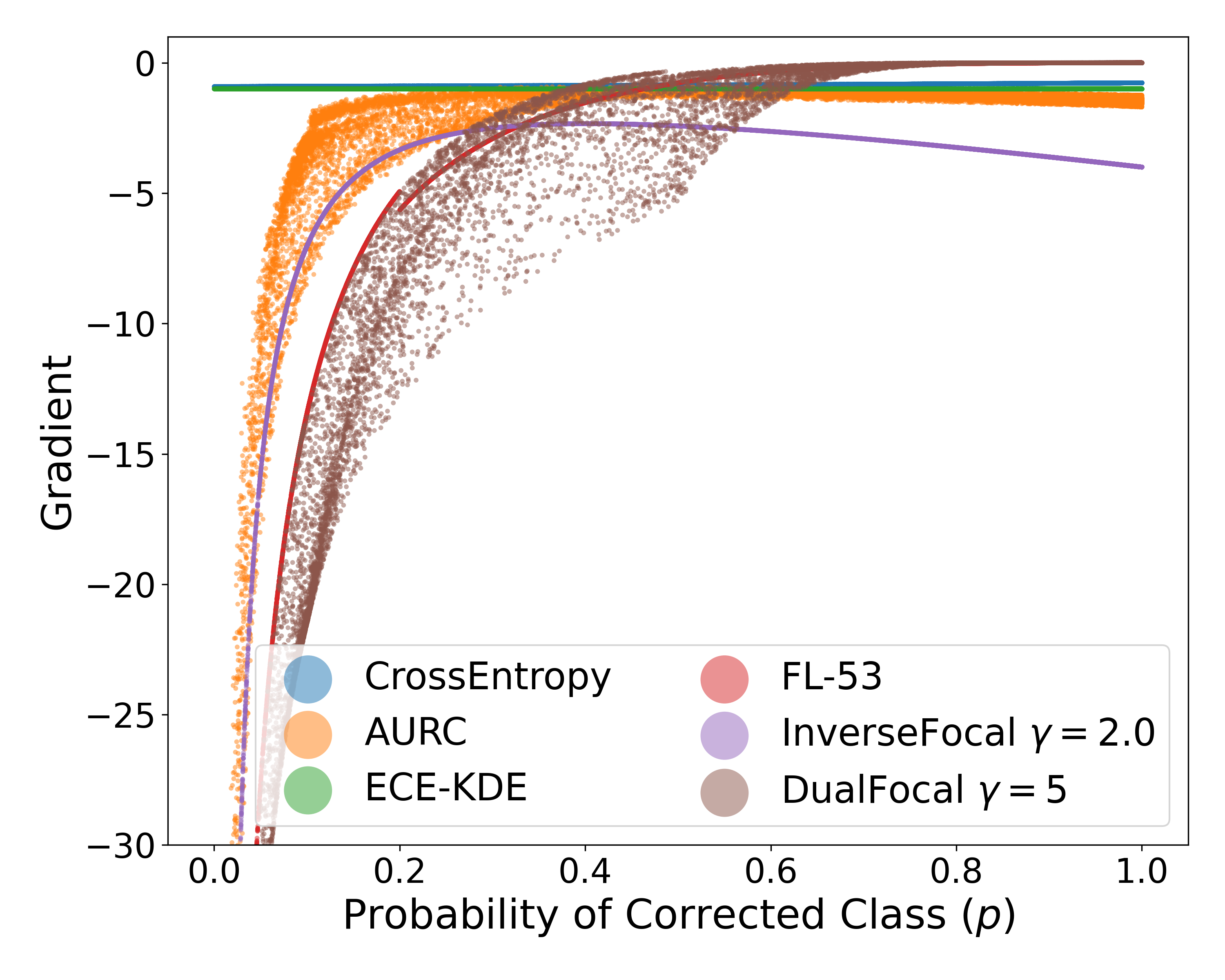}
    \subcaption{Epoch 120}
    \label{fig:gradientepoch120}
\end{minipage}
\begin{minipage}{0.32\linewidth}
    \centering
    \includegraphics[width=1.0\linewidth]{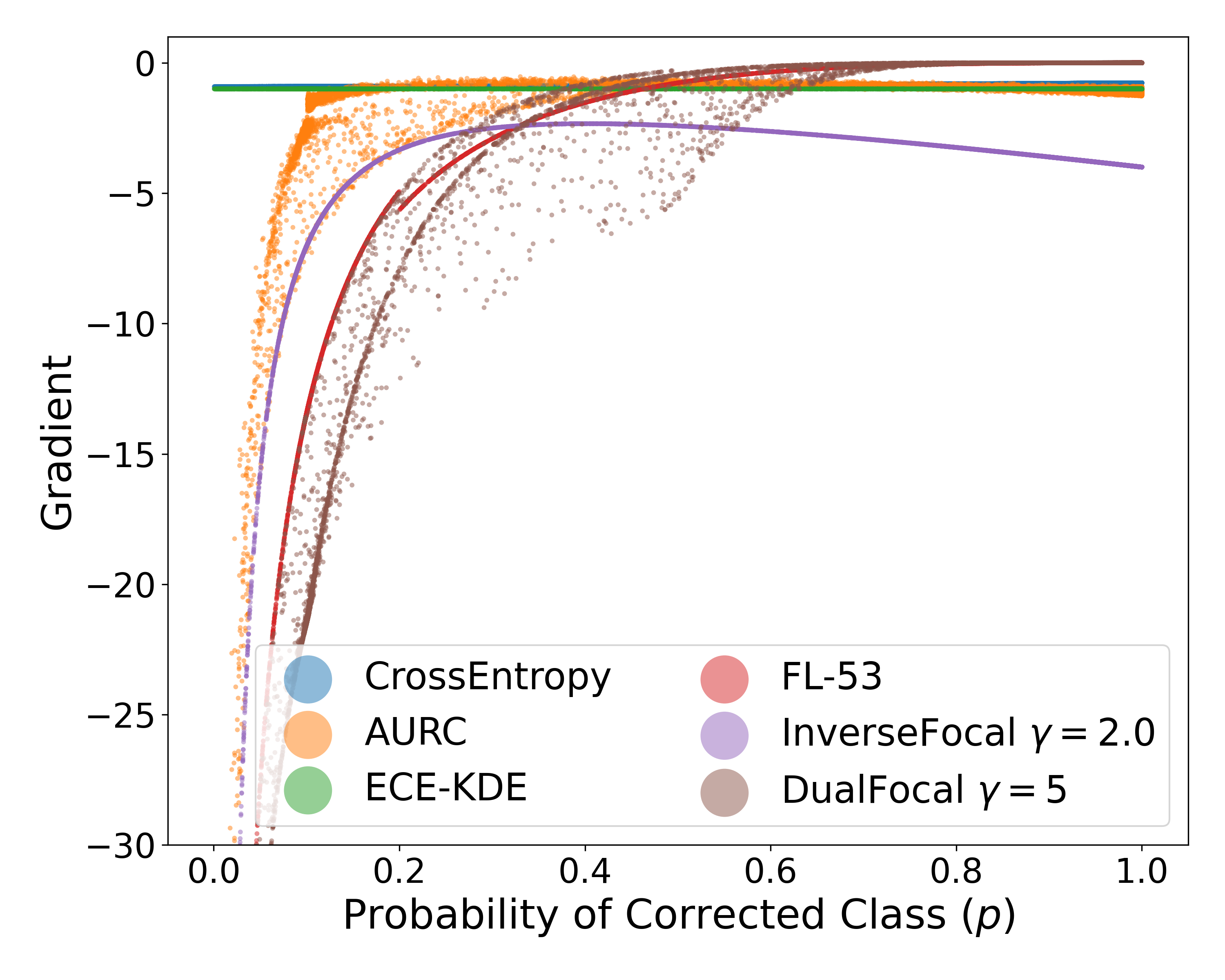}
    \subcaption{Epoch 200}
    \label{fig:gradientepoch200}
\end{minipage}

\caption{Evolution of the loss gradient $\partial \ell / \partial p_{y^{\prime}}$ with respect to model prediction probability $p_{y^{\prime}}$ for the same ResNet model evaluated on 20,000 CIFAR-10 training samples throughout training.}
\label{fig:gradients_over_epochs}
\end{figure*}
% Dual Focal loss (DFL)~\cite{tao2023dual} generalizes Focal loss by incorporating both $p_{y'}$ (the probability of the ground-truth class) and the probability of the most competitive alternative class, $p_j$. The intended formulation is
% \begin{equation}
% \label{eq:dfl}
% \ell_{\text{DFL}}(\boldsymbol{p}, \boldsymbol{y}) 
% \;=\; - \bigl(1 - p_{y'} + p_j \bigr)^{\gamma} \, \log p_{y'}.
% \end{equation}

\subsection{$\mathbf{\text{AU}_{\mathbb{P}_{\mathcal{X}_\tau}}}$ Loss} \label{gradient:au}
For the AU loss, we can compute the gradient with respect to the model output $\boldsymbol{p}_i = f_\theta(\boldsymbol{x}_i)$ as follows:
\begin{align} \label{eq:gradientAU}
\nabla_{\boldsymbol{p}_i}  \left( \widehat{\beta}_i  \ell(\boldsymbol{p}_i, \boldsymbol{y}_i) \right)
&= (\nabla_{\boldsymbol{p}_i} \widehat{\beta}_i) \, \ell(\boldsymbol{p}_i, \boldsymbol{y}_i) + \widehat{\beta}_i \, \nabla_{\boldsymbol{p}_i} \ell(\boldsymbol{p}_i, \boldsymbol{y}_i) ,
\end{align}
where the gradient of the sample weight $\widehat{\beta}_i$ is
\begin{align}
\nabla_{\boldsymbol{p}_i} \widehat{\beta}_i &=
\begin{cases}
\dfrac{1}{1 - \widehat{G}_\nu(s_i)} \, \nabla_{\boldsymbol{p}_i} \widehat{G}_\nu(s_i), & s_i < \tau,\\[1ex]
0, & s_i\ge \tau,
\end{cases} \nonumber
\end{align}
and the gradient of the smoothed bin-based CDF is
\begin{align}
\nabla_{\boldsymbol{p}_i} \widehat{G}_\nu(s_i) 
&= \frac{1}{n} \sum_{m=0}^{M-1} h_m \, \sigma'\Big(\frac{s_i - c_m}{\nu}\Big) \frac{1}{\nu} \, \nabla_{\boldsymbol{p}_i} s_i, \nonumber \\
\sigma'(t) &= \sigma(t)(1-\sigma(t)). \nonumber
\end{align}
We can see that the gradient decomposes into two components: one corresponding to the weight $\widehat{\beta}_i$ (which depends on the smoothed CDF) and the other corresponding to the original loss $\ell$. For high-confidence samples ($g(\boldsymbol{x}_i) \ge \tau$), the gradient reduces to the usual weighted loss term since $\nabla_{\boldsymbol{p}_i} \widehat{\beta}_i = 0$.
\section{ADDITIONAL EXPERIMENTS} 
\subsection{Experimental Setup} \label{appendix:trainingdetails}
The experiments are conducted using NVIDIA A100 GPUs. All models are trained using stochastic gradient descent (SGD) with a momentum of 0.9. For CIFAR-10 and CIFAR-100, we use a mini-batch size of 128, an initial learning rate of 0.1, and an $\ell_2$ weight decay of 0.0005. The learning rate is decayed by a factor of 0.2 at epochs 60, 120, and 160, and the training is conducted for 200 epochs for CIFAR-10 and 250 epochs for CIFAR-100. For all transformer-based models trained on Tiny-ImageNet, we use a mini-batch size of 256, an initial learning rate of $1 \times 10^{-4}$, a weight decay of 0.0005, and cosine learning rate scheduling for 90 epochs. For each loss function, we report the testing performance of the best model checkpoint selected based on the lowest validation loss after a warmup period (up to 40 epochs for CIFAR-10/100, 20 epochs for Tiny-ImageNet). Each configuration is run with three random seeds; we report the mean across runs.
% and the training is conducted for 200 epochs for CIFAR-10 and 250 epochs for CIFAR-100.

% \noindent \textbf{Post-hoc Calibration: } We utilized pretrained models that were trained with XE loss for all post-hoc experiments. For TS, the temperature is select by minimizing the ECE on the training set. For Meta-Cal, we follow the original configuration: for MetaMis (miscoverage-rate constraint), we set the miscoverage tolerance to 0.05; for MetaAcc (coverage-accuracy constraint), we target coverages of 0.97 on CIFAR-10, 0.87 on CIFAR-100, and 0.85 on Tiny-ImageNet. For Dirichlet Calibration, We employ 3-fold cross-validation with grid search over L2 regularization parameter $\lambda$, and we select optimal hyper-parameters by minimizing ECE on validation folds. For the $\mathrm{AU}_{P_{\mathcal{X}_\tau}}$ loss, we fine-tune each model with this loss for 5 epochs with an initial learning rate of $1\times10^{-4}$ on CIFAR-10/100 and $1\times10^{-5}$ on Tiny-ImageNet. 

\section{Model Selection} \label{appendix:modelselection}
\subsection{Baselines}
Following~\cite{mukhoti2020calibrating, tao2023dual}, we perform cross-validation to select the optimal $\gamma$ value. We hold out 5,000 images from the training set as a validation set, train on the rest, and choose the best $\gamma$ based on validation performance i.e. loss (see Table~\ref{tb:gammaforinvdual} for the selected $\gamma$ values). Fig.~\ref{fig:gammaselection} presents the performance on the validation set for Tiny-ImageNet data. For Inverse Focal Loss, Fig.~\ref{fig:gammaselection}a shows that $\gamma \ge 4$ consistently underperform, which should not be considered in practice. For KDE-XE, we use the regularization parameters recommended in the original work: $\lambda = 0.3$ for CIFAR-10 and $\lambda = 0.5$ for CIFAR-100.

\begin{figure}[htbp]
    \centering
    \begin{minipage}{0.9\linewidth}
        \centering
        \includegraphics[width=1\linewidth]{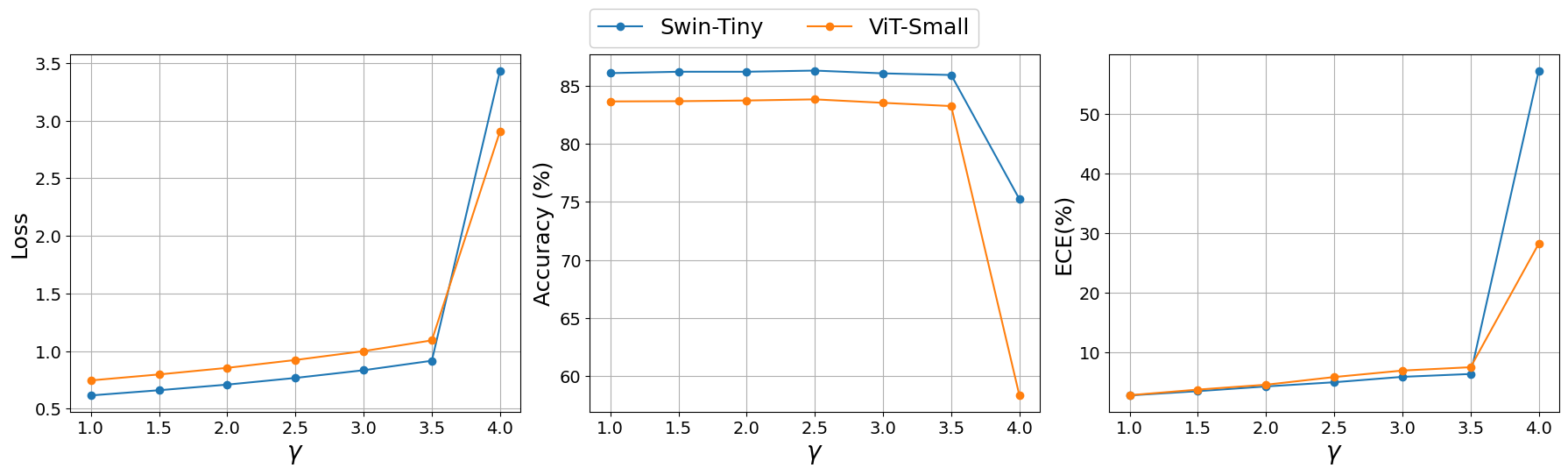}
        \subcaption{Inverse focal}
    \end{minipage}
    \begin{minipage}{0.9\linewidth}
        \centering
        \includegraphics[width=1\linewidth]{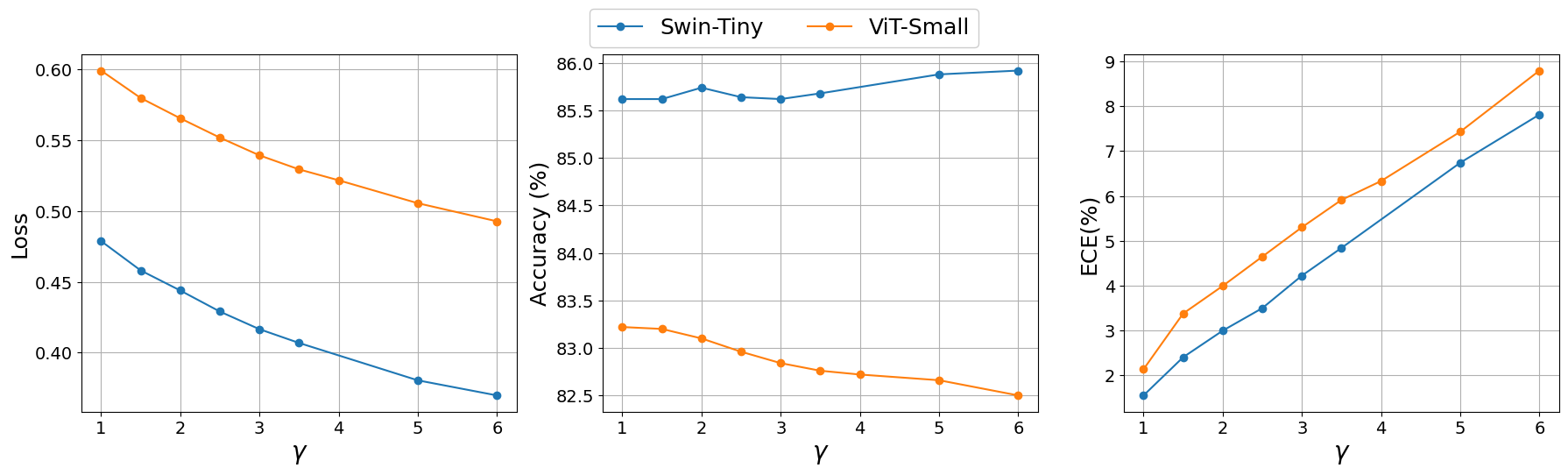}
        \subcaption{Dual focal}
    \end{minipage}
    \vspace{-2mm}
    \caption{\(\gamma\) selection for inverse focal loss using cross-validation on Tiny-Imagenet. } 
    \label{fig:gammaselection}
\end{figure}
\begin{table}[htbp]
\caption{Selected $\gamma$ values for Dual Focal and Inverse Focal loss across models and datasets.}
\vspace{-2mm}
\centering
\footnotesize
\setlength{\tabcolsep}{2.4pt}
\renewcommand{\arraystretch}{1.15}
\begin{tabular}{l|cccc|cccc|cc}
\toprule
\textbf{Loss} 
  & \multicolumn{4}{c|}{\textbf{CIFAR-10}} 
  & \multicolumn{4}{c|}{\textbf{CIFAR-100}} 
  & \multicolumn{2}{c}{\textbf{Tiny-ImageNet}} \\
\cmidrule(lr){2-5} \cmidrule(lr){6-9} \cmidrule(lr){10-11}
  & ResNet50 & ResNet110 & WRN & PreResNet56 
  & ResNet50 & ResNet110 & WRN & PreResNet56 
  & ViT-Small & Swin-Tiny \\
\midrule
Dual     & 5.0 & 4.5 & 2.6 & 4.0 
         & 5.0 & 6.1 & 3.9 & 4.5 
         & 1.0 & 1.0 \\
Inv. FL  & 2.0 & 3.0 & 1.5 & 2.0 
         & 1.0 & 1.5 & 1.0 & 1.0 
         & 1.0 & 1.0 \\
\bottomrule
\end{tabular}
\label{tb:gammaforinvdual}
\end{table}

\begin{figure*}[htbp]
\centering
\begin{minipage}{0.24\linewidth}
    \centering
    \includegraphics[width=1.0\linewidth]{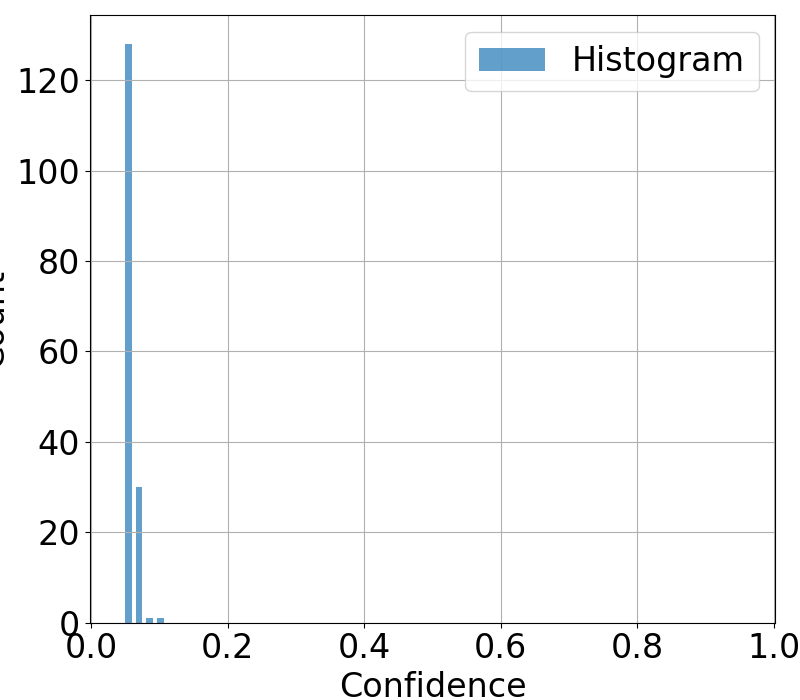}
    \subcaption{Histgram}
    \label{fig:hist}
\end{minipage}
\begin{minipage}{0.24\linewidth}
    \centering
    \includegraphics[width=1.0\linewidth]{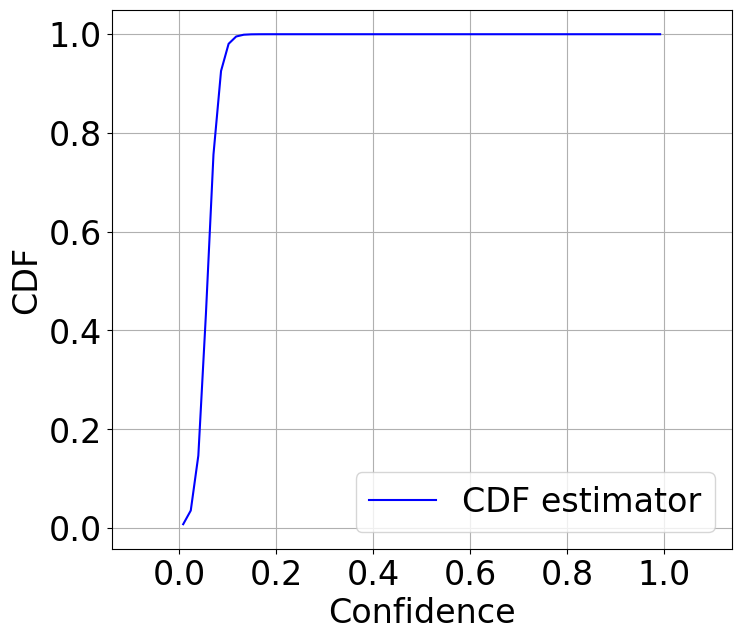}
    \subcaption{$\nu=0.01$}
    \label{fig:nu0.01}
\end{minipage}
\begin{minipage}{0.24\linewidth}
    \centering
    \includegraphics[width=1.0\linewidth]{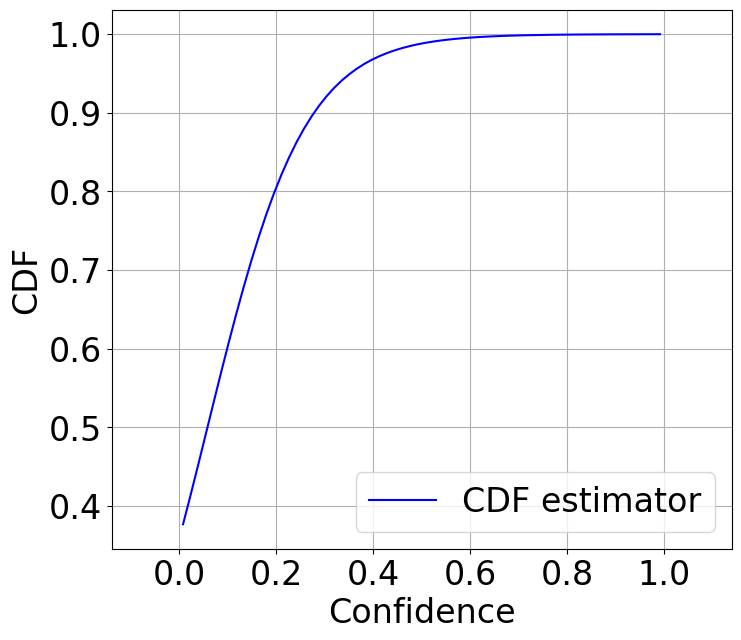}
    \subcaption{$\nu=0.1$}
    \label{fig:nu0.1}
\end{minipage}
\begin{minipage}{0.24\linewidth}
    \centering
    \includegraphics[width=1.0\linewidth]{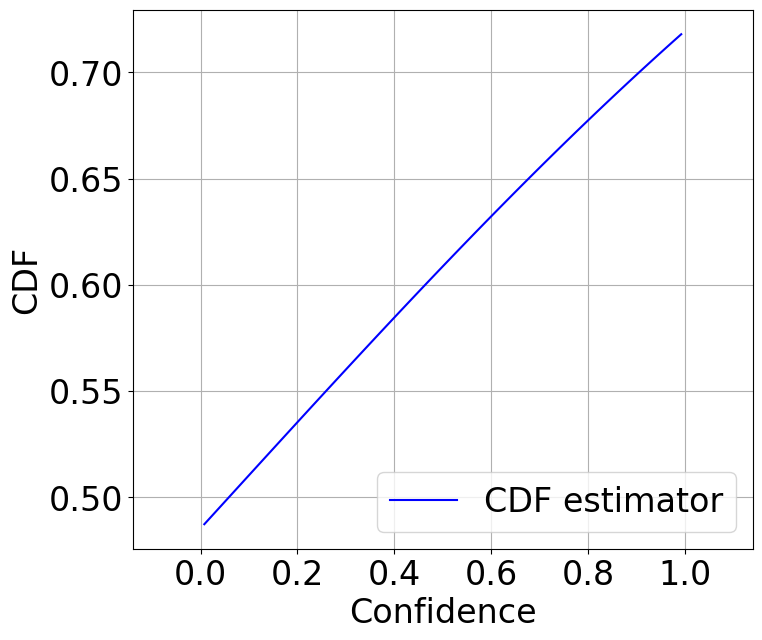}
    \subcaption{$\nu=1.0$}
    \label{fig:nu1}
\end{minipage}
\vspace{-2mm}
\caption{Effect of the smoothness parameter $\nu$ on the CDF approximation.}
\label{fig:smoothparameter}
\end{figure*}

\subsection{$\mathbf{\text{AU}_{\mathbb{P}_{\mathcal{X}_\tau}}}$ loss} \label{appsec:auloss}
This loss has two tunable parameters: the smooth parameter $\nu$ introduced by the CDF approximation and the threshold $\tau$ in Eq.~\eqref{eq:auloss}. \\
\noindent \textbf{Smooth parameter}.
The smoothness parameter $\nu > 0$ in the bin-based CDF approximation controls the steepness of the sigmoid function used to approximate the step function for each bin. As shown in Fig.~\ref{fig:smoothparameter}, smaller values of $\nu$ produce a sharper transition, closely mimicking the discrete step function and providing a more accurate approximation of the true CDF, but may lead to large gradients and unstable training. Larger values of $\nu$ result in a flatter, smoother transition, which stabilizes gradient-based optimization but may over-smooth the CDF and reduce ranking precision in low-confidence regions. In practice, $\nu$ is typically chosen as a small fraction of the bin width or tuned via cross-validation, balancing the trade-off between approximation fidelity and differentiability for efficient optimization. We set this parameter to 0.1 for all models on CIFAR-10 and 0.005 for ResNet models, 0.1 for WRN, and 0.01 for PreResNet56 on CIFAR-100. For transformer models, it is set to 0.05 for Swin-Tiny and ViT-Small.

\begin{figure}[htbp]
    \centering
    \begin{subfigure}{0.49\linewidth}
        \centering
        \includegraphics[width=2.1in]{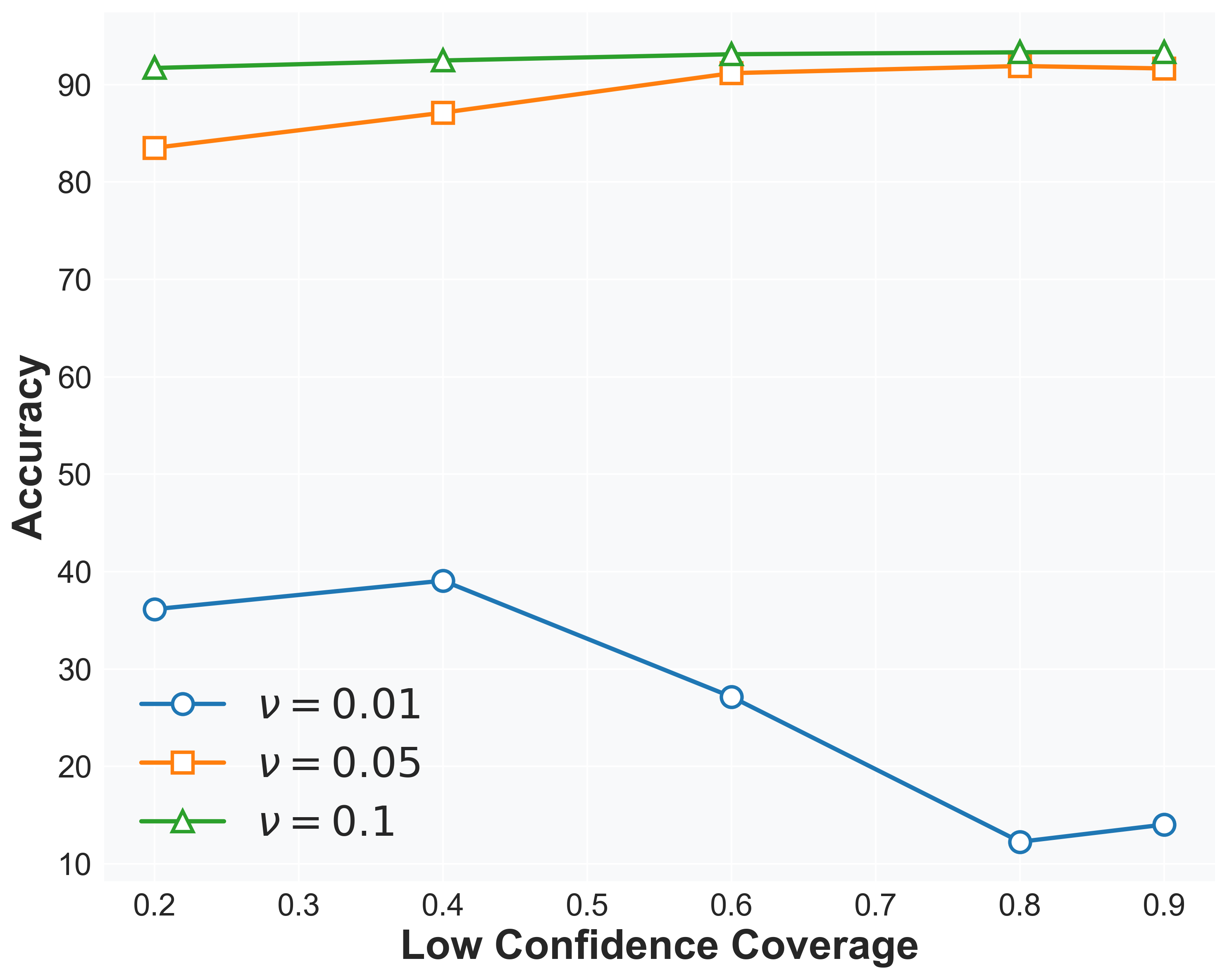}
        \caption{Accuracy}
    \end{subfigure}
    \hfill
    \begin{subfigure}{0.49\linewidth}
        \centering
        \includegraphics[width=2.1in]{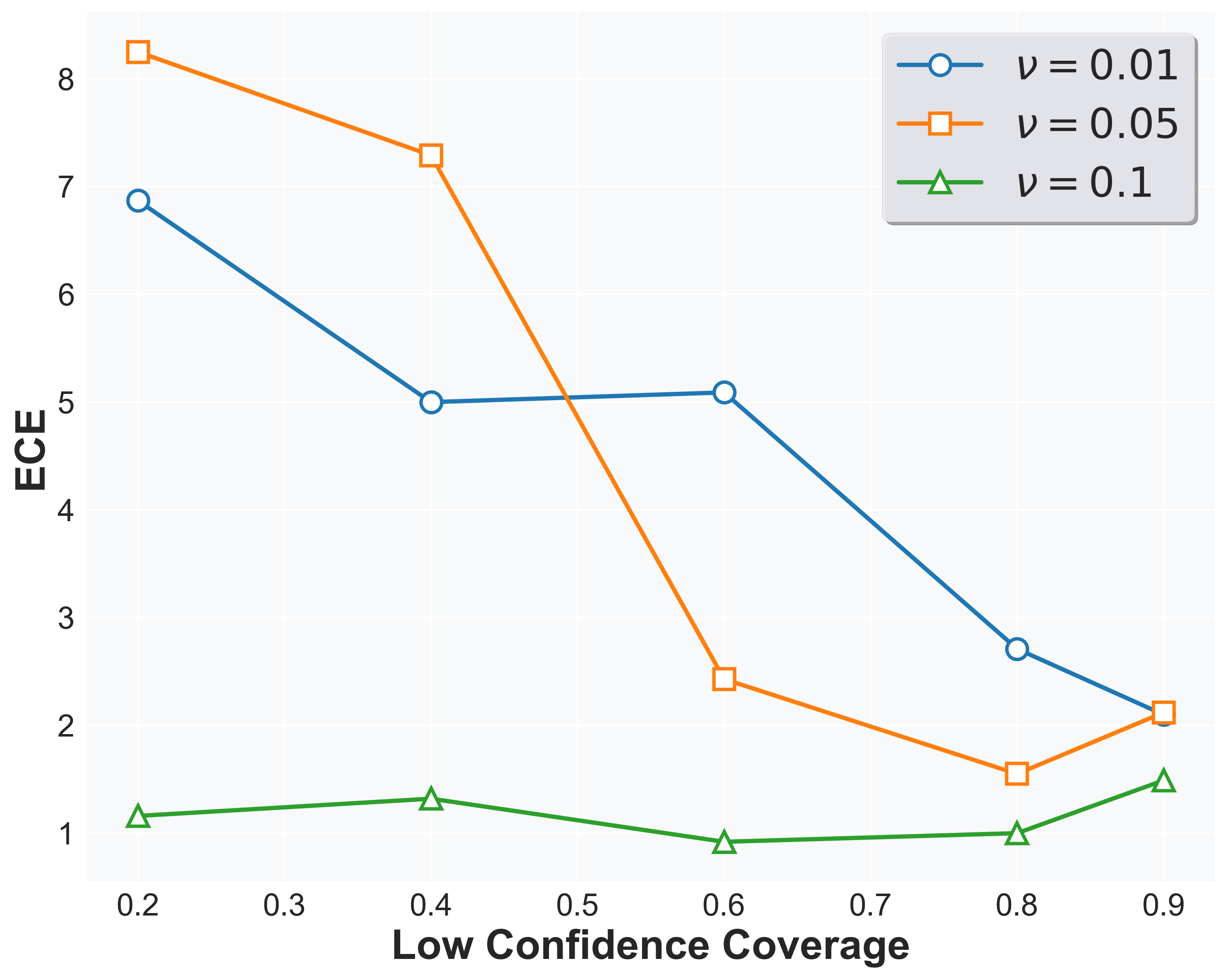}
        \caption{Gradient}
    \end{subfigure}
    \vspace{-2mm}
 \caption{Validation Performance of PreResNet56 on CIFAR-10 across low confidence coverage and $\nu$ Values}
    \label{fig:nuselection}
\end{figure}

\noindent\textbf{Coverage-Based Threshold $\tau$.} Instead of using a fixed threshold, we determine $\tau$ dynamically based on a desired coverage level, which specifies the fraction of samples considered low-confidence. Under this scheme, the threshold is set such that the bottom $\kappa$ fraction of samples ranked by the CSF $g(\boldsymbol{x})$ are treated as low-confidence, while the remaining samples are classified as high-confidence and assigned uniform weights. This approach allows the loss to adapt to the distribution of confidence scores within each batch or dataset, ensuring that a consistent proportion of uncertain samples is emphasized for calibration, regardless of 
confidence values. Coverage-based thresholding provides a more flexible and robust mechanism for targeting low-confidence regions compared to a fixed $\tau$, particularly in settings where confidence distributions vary across datasets or model architectures. The selected low-confidence coverage for different models and datasets is summarized in Table~\ref{tb:thresholdforau}. A subset of the selection results is presented in Table~\ref{fig:nuselection}.

\begin{table}[htbp]
\caption{Selected low confidence coverage for $\mathbf{\text{AU}_{P_{\mathcal{X}_\tau}}}$ loss across models and datasets.}
\vspace{-2mm}
\centering
% \footnotesize
\setlength{\tabcolsep}{2.4pt}
\renewcommand{\arraystretch}{1.15}
\begin{tabular}{cccc|cccc|cc}
\toprule
\multicolumn{4}{c|}{\textbf{CIFAR-10}} 
  & \multicolumn{4}{c|}{\textbf{CIFAR-100}} 
  & \multicolumn{2}{c}{\textbf{Tiny-ImageNet}} \\
\cmidrule(lr){1-4} \cmidrule(lr){5-8} \cmidrule(lr){9-10}
ResNet50 & ResNet110 & WRN & PreResNet56 
  & ResNet50 & ResNet110 & WRN & PreResNet56 
  & ViT-Small & Swin-Tiny \\
\midrule
0.88 & 0.75 & 0.8 & 0.75
  & 0.75 & 0.8 & 0.8 & 0.92
  & 0.4 & 0.6 \\
\bottomrule
\end{tabular}
\label{tb:thresholdforau}
\end{table}

\subsection{Additional Results} \label{appdix:Addresult}

% On CIFAR-100, both KDE-XE and Dual Focal loss show competitive cwECE results while maintaining relatively high accuracy. 

\noindent [\textbf{Top-label calibration}.] Table~\ref{tab:ece_a_ts_split} presents a comprehensive evaluation of top-label calibration, both before and after applying post-hoc calibration techniques, including TS, MetaMis, and MetaAcc, as measured by ECE$^{\text{a}}$. The results show that $\mathbf{\text{AU}_{\mathbb{P}_{\mathcal{X}_\tau}}}$ delivers strong top-label calibration performance over standard losses such as XE and FL-53, and remains competitive with Dual Focal. In contrast, FL-53 shows limited effectiveness in top-label calibration across CIFAR-10 and Tiny-ImageNet. 

\noindent [\textbf{Effect of post-hoc calibration method.}] Although post-hoc calibration techniques such as MetaMis and MetaAcc are designed to enhance calibration by leveraging confidence information similar to our proposed method, they exhibit a noticeable trade-off between calibration quality and predictive performance, as indicated in Table~\ref{tab:ece_a_ts_split}. While both methods occasionally reduce $\widehat{\text{ECE}}$, the improvements are often offset by substantial accuracy degradation. For instance, accuracy drops exceeding 10\% are observed in models like WRN on CIFAR-100. On Tiny-ImageNet, MetaAcc yields inconsistent results and, in some cases, negatively impacts calibration. These findings suggest that, despite their intended purpose, MetaMis and MetaAcc lack the robustness and reliability demonstrated by our proposed method. 

\begin{table*}[htbp]
\caption{Mean $\widehat{\text{ECE}}^a$ $\times 10^{2}$ ($\downarrow$) for original models, original models, and after applying post-hoc calibration methods including TS, MetaMis, and MetaAcc. Accuracy (\%) shown in brackets. For TS, temperature values are included in parentheses. For TS, the temperature $T$ is selected by minimizing ECE.}
\label{tab:ece_a_ts_split}
\centering
\renewcommand{\arraystretch}{1.15}
\setlength{\tabcolsep}{2pt}
\resizebox{1\textwidth}{!}{%
\begin{tabular}{@{}lllllllll@{}}
\toprule
                & \textbf{Model}       & \textbf{XE} & \textbf{FL-53} & \textbf{Inv. FL} & \textbf{Dual} & \textbf{KDE-XE} & \textbf{AURC} & \textbf{$\mathbf{\text{AU}_{\mathbb{P}_{\mathcal{X}_\tau}}}$} \\
\midrule
\multicolumn{9}{l}{\textbf{CIFAR10}} \\
\rowcolor{gray!20}  & ResNet50 & 2.97 [\textbf{94.5}] & 3.92 [93.6] & 4.03 [94.4] & \textbf{1.21} [94.1] & 3.07 [94.5] & 2.42 [93.3] & 1.35 [92.5] \\
  & \hspace{1em}+TS & 2.60 [\textbf{94.5}] (1.1) & 1.71 [93.6] (0.9) & 3.64 [94.4] (1.2) & \textbf{0.94} [94.1] (1.0) & 2.66 [94.5] (1.1) & 2.41 [93.3] (1.0) & 1.04 [92.5] (1.2) \\
  & \hspace{1em}+MetaMis & 0.83 [84.7] & 1.99 [86.6] & 0.85 [84.3] & 0.79 [85.5] & 0.68 [84.3] & 1.21 [87.1] & \textbf{0.67} [\textbf{88.8}] \\
  & \hspace{1em}+MetaAcc & \textbf{0.44} [77.5] & 1.85 [74.9] & 0.58 [78.7] & 1.04 [72.0] & 0.51 [76.9] & 1.77 [\textbf{90.0}] & 0.77 [89.7] \\
\rowcolor{gray!20}  & ResNet110 & 3.00 [94.5] & 4.39 [93.8] & 4.76 [93.9] & 1.93 [93.7] & 2.77 [\textbf{94.7}] & 2.36 [93.0] & \textbf{1.33} [92.4] \\
  & \hspace{1em}+TS & 2.63 [94.5] (1.1) & 1.95 [93.8] (0.8) & 4.26 [93.9] (1.3) & 1.19 [93.7] (0.9) & 2.45 [\textbf{94.7}] (1.1) & 2.50 [93.0] (1.0) & \textbf{1.00} [92.4] (1.1) \\
  & \hspace{1em}+MetaMis & 0.69 [84.5] & 2.33 [87.6] & 1.68 [86.7] & 1.22 [86.1] & 0.70 [86.0] & 1.48 [87.6] & \textbf{0.62} [\textbf{88.5}] \\
  & \hspace{1em}+MetaAcc & 0.44 [78.1] & 1.98 [74.2] & 2.42 [88.1] & 1.36 [77.2] & \textbf{0.44} [78.0] & 2.22 [\textbf{91.6}] & 0.75 [89.8] \\
\rowcolor{gray!20}  & WRN & 1.93 [95.9] & 5.72 [95.5] & 2.37 [\textbf{96.0}] & 3.93 [95.7] & 2.02 [95.8] & 1.72 [94.9] & \textbf{1.50} [95.3] \\
  & \hspace{1em}+TS & 1.31 [95.9] (1.2) & 2.90 [95.5] (0.9) & 1.86 [\textbf{96.0}] (1.2) & 3.14 [95.7] (1.0) & 1.39 [95.8] (1.2) & 1.20 [94.9] (1.1) & \textbf{0.41} [95.3] (1.3) \\
  & \hspace{1em}+MetaMis & 0.50 [82.6] & 2.62 [86.2] & \textbf{0.44} [82.6] & 2.26 [84.8] & 0.56 [82.5] & 0.74 [\textbf{87.3}] & 0.60 [84.6] \\
  & \hspace{1em}+MetaAcc & 0.67 [71.6] & 2.35 [69.2] & \textbf{0.52} [72.4] & 1.97 [66.4] & 0.80 [71.0] & 0.61 [\textbf{82.1}] & 0.81 [72.6] \\
\rowcolor{gray!20}  & PreResNet56 & 2.62 [94.0] & 5.42 [93.0] & 3.85 [\textbf{94.1}] & 2.05 [93.7] & 2.55 [94.1] & 2.56 [92.6] & \textbf{0.85} [92.8] \\
  & \hspace{1em}+TS & 2.52 [94.0] (1.0) & 0.97 [93.0] (0.7) & 3.75 [\textbf{94.1}] (1.1) & 0.99 [93.7] (0.9) & 2.44 [94.1] (1.0) & 2.76 [92.6] (1.0) & \textbf{0.50} [92.8] (1.1) \\
  & \hspace{1em}+MetaMis & 0.94 [85.9] & 1.53 [88.7] & 1.05 [84.8] & 1.08 [86.0] & 0.90 [85.3] & 1.49 [87.8] & \textbf{0.39} [\textbf{89.3}] \\
  & \hspace{1em}+MetaAcc & \textbf{0.38} [75.9] & 2.12 [75.8] & 1.27 [82.7] & 1.45 [68.2] & 0.89 [81.9] & 2.49 [\textbf{91.5}] & 0.52 [90.5] \\
\midrule
\multicolumn{9}{l}{\textbf{CIFAR100}} \\
\rowcolor{gray!20}  & ResNet50 & 7.15 [74.4] & 4.56 [\textbf{76.4}] & 10.70 [75.6] & 4.21 [75.5] & 9.52 [76.2] & 8.28 [74.5] & \textbf{2.96} [74.7]\\
  & \hspace{1em}+TS & 1.73 [74.4] (1.3) & 1.30 [\textbf{76.4}] (0.9) & 3.78 [75.6] (1.4) & \textbf{1.17} [75.5] (0.9) & 2.93 [76.2] (1.4) & 2.77 [74.5] (1.4) & 2.32 [74.7] (1.1)\\
  & \hspace{1em}+MetaMis & 6.72 [63.7] & 2.71 [59.1] & \textbf{2.03} [47.6] & 3.16 [60.7] & 2.21 [49.8] & 9.94 [68.7] & 8.16 [\textbf{69.5}] \\
  & \hspace{1em}+MetaAcc & 6.97 [61.0] & \textbf{0.92} [38.6] & 1.15 [37.6] & 1.06 [40.9] & 1.09 [38.6] & 13.49 [\textbf{73.5}] & 9.08 [71.6] \\
\rowcolor{gray!20}  & ResNet110 & 7.25 [75.3] & 4.54 [77.5] & 10.86 [77.4] & 4.58 [76.3] & 9.04 [\textbf{78.3}] & 7.94 [74.3] & \textbf{3.01} [74.5] \\
  & \hspace{1em}+TS & 1.85 [75.3] (1.3) & \textbf{0.95} [77.5] (0.9) & 4.33 [77.4] (1.4) & 0.98 [76.3] (0.9) & 2.58 [\textbf{78.3}] (1.4) & 1.96 [74.3] (1.3) & 2.34 [74.5] (1.1) \\
  & \hspace{1em}+MetaMis & 6.55 [63.3] & 2.35 [58.4] & \textbf{1.81} [48.3] & 3.39 [62.9] & 1.89 [50.7] & 10.23 [69.2] & 7.49 [\textbf{70.6}] \\
  & \hspace{1em}+MetaAcc & 6.79 [61.1] & \textbf{0.76} [38.8] & 1.11 [39.1] & 1.19 [43.0] & 1.06 [40.6] & 13.50 [\textbf{73.6}] & 8.43 [72.9] \\
\rowcolor{gray!20}  & WRN & 5.31 [78.7] & 6.20 [79.1] & 6.07 [79.5] & \textbf{3.57} [78.4] & 5.66 [79.6] & 4.66 [76.5] & 4.43 [\textbf{79.8}] \\
  & \hspace{1em}+TS & 3.24 [78.7] (1.2) & \textbf{1.19} [79.1] (0.8) & 4.17 [79.5] (1.3) & 1.32 [78.4] (0.9) & 3.63 [79.6] (1.2) & 2.70 [76.5] (1.2) & 4.08 [\textbf{79.8}] (1.1) \\
  & \hspace{1em}+MetaMis & 3.57 [54.4] & 2.08 [56.8] & \textbf{1.18} [46.4] & 2.52 [57.8] & 1.27 [46.9] & 7.22 [\textbf{69.5}] & 1.25 [46.2] \\
  & \hspace{1em}+MetaAcc & 3.73 [48.2] & \textbf{0.56} [34.3] & 0.60 [34.9] & 0.68 [38.0] & 0.70 [37.0] & 9.21 [\textbf{74.0}] & 0.63 [32.3] \\
\rowcolor{gray!20}  & PreResNet56 & 7.09 [74.3] & 2.49 [72.9] & 9.80 [74.2] & \textbf{1.00} [73.0] & 7.41 [\textbf{74.7}] & 4.76 [71.9] & 2.10 [73.9] \\
  & \hspace{1em}+TS & 1.33 [74.3] (1.4) & 1.38 [72.9] (0.9) & 2.48 [74.2] (1.4) & \textbf{0.96} [73.0] (1.0) & 1.61 [\textbf{74.7}] (1.4) & 2.14 [71.9] (1.2) & 1.67 [73.9] (1.1) \\
  & \hspace{1em}+MetaMis & 9.55 [69.3] & \textbf{5.00} [67.8] & 10.55 [69.2] & 7.66 [\textbf{70.4}] & 9.75 [70.0] & 9.66 [68.2] & 7.64 [70.3] \\
  & \hspace{1em}+MetaAcc & 10.51 [70.6] & \textbf{5.28} [68.6] & 12.51 [71.8] & 7.84 [70.9] & 10.53 [71.0] & 11.16 [71.2] & 8.74 [\textbf{73.2}] \\
\midrule
\multicolumn{9}{l}{\textbf{Tiny-ImageNet}} \\
\rowcolor{gray!20}  & ViT-Small & 4.55 [82.9] & 3.97 [81.4] & 6.22 [82.9] & 3.20 [82.1] & 3.01 [83.3] & \textbf{1.23} [82.7] & 1.36 [\textbf{83.6}] \\
  & \hspace{1em}+TS & 1.81 [82.9] (1.2) & 1.89 [81.4] (0.9) & 1.99 [82.9] (1.3) & 1.79 [82.1] (1.1) & 1.43 [83.3] (1.1) & \textbf{1.18} [82.7] (1.0) & 1.31 [\textbf{83.6}] (1.0) \\
  & \hspace{1em}+MetaMis & 5.18 [76.9] & \textbf{3.12} [76.7] & 4.65 [74.6] & 4.99 [75.5] & 4.74 [77.8] & 3.57 [78.1] & 4.02 [\textbf{78.9}] \\
  & \hspace{1em}+MetaAcc & 7.06 [80.5] & \textbf{1.42} [64.3] & 4.81 [70.7] & 4.13 [70.3] & 6.49 [81.3] & 4.10 [81.7] & 5.38 [\textbf{82.3}] \\
\rowcolor{gray!20}  & Swin-Tiny & 1.63 [85.4] & 7.69 [84.3] & 2.73 [\textbf{85.6}] & 1.52 [85.4] & 1.52 [85.4] & 1.88 [82.0] & \textbf{1.30} [84.1] \\
  & \hspace{1em}+TS & \textbf{1.12} [85.4] (1.0) & 1.72 [84.3] (0.8) & 1.40 [\textbf{85.6}] (1.1) & 1.31 [85.4] (1.0) & 1.34 [85.4] (1.1) & 1.46 [82.0] (0.9) & 1.12 [84.1] (1.0) \\
  & \hspace{1em}+MetaMis & 3.29 [81.7] & \textbf{1.74} [80.7] & 2.85 [81.2] & 3.25 [\textbf{81.8}] & 3.15 [81.3] & 1.95 [77.8] & 2.21 [79.7] \\
  & \hspace{1em}+MetaAcc & 3.99 [84.2] & 2.29 [82.5] & 3.67 [\textbf{84.3}] & 4.16 [84.2] & 3.98 [84.2] & \textbf{1.93} [79.7] & 2.91 [83.0] \\
\bottomrule
\end{tabular}}
\end{table*}

\end{document}